\documentclass{article}
\usepackage{iclr2026_conference,times}
\iclrfinalcopy

\usepackage{amsmath,amssymb,amsfonts,amsthm,mathtools}

\usepackage[utf8]{inputenc}
\usepackage[T1]{fontenc}
\usepackage{hyperref}
\usepackage{url}
\usepackage{booktabs}
\usepackage{nicefrac}
\usepackage{microtype}
\usepackage{xcolor}
\definecolor{ForestGreen}{rgb}{0.13, 0.55, 0.13}
\usepackage{graphicx}
\usepackage{subcaption}
\usepackage{multirow}
\usepackage{colortbl}
\usepackage{float}
\usepackage{algorithm}
\usepackage{algorithmic}
\usepackage{natbib}

\usepackage[capitalize,noabbrev]{cleveref}

\usepackage{aliascnt}
\theoremstyle{plain}
\newtheorem{theorem}{Theorem}

\newaliascnt{proposition}{theorem}
\newtheorem{proposition}[proposition]{Proposition}
\aliascntresetthe{proposition}

\newaliascnt{lemma}{theorem}

\aliascntresetthe{lemma}

\newaliascnt{corollary}{theorem}

\aliascntresetthe{corollary}

\theoremstyle{definition}
\newaliascnt{definition}{theorem}

\aliascntresetthe{definition}

\newaliascnt{assumption}{theorem}

\aliascntresetthe{assumption}

\theoremstyle{remark}
\newaliascnt{remark}{theorem}
\newtheorem{remark}[remark]{Remark}
\aliascntresetthe{remark}

\crefname{proposition}{Proposition}{Propositions}
\Crefname{proposition}{Proposition}{Propositions}
\crefname{theorem}{Theorem}{Theorems}
\Crefname{theorem}{Theorem}{Theorems}
\crefname{lemma}{Lemma}{Lemmas}
\Crefname{lemma}{Lemma}{Lemmas}
\crefname{assumption}{Assumption}{Assumptions}
\Crefname{assumption}{Assumption}{Assumptions}

\newcommand{\R}{\mathbb{R}}

\DeclareMathOperator*{\argmin}{arg\,min}
\DeclareMathOperator*{\argmax}{arg\,max}
\DeclareMathOperator{\tr}{tr}
\DeclareMathOperator{\diag}{diag}
\DeclareMathOperator{\rank}{rank}
\DeclareMathOperator{\kerfn}{ker}

\newcommand{\Kinv}{\mathbf{K}^{-1}}            
\newcommand{\Hlik}{H_{\mathrm{lik}}}           
\newcommand{\Rmat}{\mathbf{R}}                 
\newcommand{\sexp}{s_{\mathrm{exp}}}            

\newcommand{\kappafn}{\kappa}                   
\newcommand{\etastar}{\eta^{*}}

\newcommand{\vf}{\mathbf{f}}

\newcommand{\mK}{\mathbf{K}}
\newcommand{\mH}{\mathbf{H}}
\newcommand{\mR}{\mathbf{R}}
\newcommand{\mC}{\mathbf{C}}
\newcommand{\mW}{\mathbf{W}}
\newcommand{\mI}{\mathbf{I}}
\newcommand{\mM}{\mathbf{M}}
\newcommand{\mB}{\mathbf{B}}

\newcommand{\normH}[1]{\left\|#1\right\|_{\bar{\mH}}}

\newcommand{\lmin}{\lambda_{\min}}

\newcommand{\Ncc}{N_{\mathrm{cc}}}

\newcommand{\probitscale}{\sigma}

\title{Adaptive KappaSharp: Condition-Number Shaping for Preferential Bayesian Optimization}

\author{
Ketong Shao$^{1}$ \quad Jialu Wang$^{2}$ \quad Xuekai Pei$^{3}$ \quad Ali Mesbah$^{1}$  \\[6pt] 
$^{1}$University of California, Berkeley \quad $^{2}$Independent \quad $^{3}$Wuhan University \\
\texttt{\{ketong\_shao, mesbah\}@berkeley.edu} \\
\texttt{luuyoyo1996@gmail.com} \quad \texttt{peixuekai@whu.edu.cn}
}

\begin{document}

\maketitle

\vspace{-5mm}
\begin{abstract}
Preferential Bayesian optimization (PBO) optimizes objectives accessible only through pairwise user comparisons. The standard approach fits a Gaussian process surrogate for observed pairwise comparisons (PairwiseGP) using the Laplace approximation and selects queries with the Expected Utility of Best Option (EUBO) acquisition function. EUBO queries new candidates at each step, producing pairs that share no candidates with previous queries. Each such pair forms an isolated component in the comparison graph, removing one degree of freedom from the likelihood Hessian and making it rank-deficient. This deficiency is structural and cannot be resolved by changing the surrogate modeling approach. Existing approaches to remedy this issue either waste query budget by forcing comparisons to stay connected, or apply uniform regularization that also perturbs directions already well-constrained by the observed comparisons. We propose KappaSharp that enables a diagonal correction to the Hessian to reduce its condition number, with larger corrections where the prior uncertainty is higher. The correction is only applied in the model fitting step, not query selection. An adaptive variant of KappaSharp is also presented that activates the correction only when the surrogate is confident about recent comparisons, avoiding unnecessary corrections when the problem is well-conditioned. On 11 benchmarks (5--20 dimensions), including a 16-dimensional controller tuning problem in plasma medicine, Adaptive KappaSharp outperforms the standard PBO baseline, with up to $+10.9\%$ ($p{=}0.003$).

\end{abstract}

\section{Introduction}
\label{sec:intro}

Bayesian optimization (BO)~\citep{brochu2010tutorial} is a popular approach for optimizing expensive black-box functions, but requires a scalar objective. Preferential BO (PBO) removes this requirement by learning from pairwise user comparisons~\citep{chu2005preference,gonzalez2017preferential}, with applications in exoskeleton gait optimization~\citep{tucker2019preference}, materials design~\citep{mikkola2020projective}, and model-based control design~\citep{shao2025coactive}. Standard PBO approaches fit a Gaussian process surrogate for observed pairwise comparisons (PairwiseGP) via the Laplace approximation~\citep{chu2005preference} and select queries using the Expected Utility of Best Option (EUBO) acquisition function~\citep{lin2022preferenceexploration,astudillo2023qeubo}, as implemented in BoTorch~\citep{balandat2020botorch}. Recent work has established regret bounds for PBO~\citep{xu2024principled}, but potential structural limitations of the standard querying strategy have not been investigated.

How queries are selected directly affects the Hessian that PairwiseGP uses to construct its posterior. In the Laplace approximation, the posterior covariance is the inverse of the Hessian of the negative log-posterior~\citep{rasmussen2006gaussian}. Hence, when the likelihood Hessian is rank-deficient, the full Hessian can become ill-conditioned and, consequently, the posterior can be poorly determined in certain directions. The problem is that observed pairwise comparisons are inherently limited: each comparison only informs the surrogate that one candidate's utility is likely higher than another's. That is, if $A$,$B$ are compared in one query and $C$,$D$ in another, the two utility gaps are learned independently. All comparisons can be represented as a \emph{comparison graph}, where nodes are candidates and edges are comparisons. Candidates connected through shared comparisons form \emph{connected components}; candidates in different components have no comparisons connecting them. We show that each disconnected component introduces one zero eigenvalue in the likelihood Hessian, and that these directions are not informed by the observed comparisons. With many disconnected components, the likelihood Hessian becomes severely rank-deficient and the full Hessian can become ill-conditioned.

This becomes a practical challenge because most acquisition functions used in PBO, including EUBO, select two candidates independently per query. When the decision variables are continuous, EUBO maximizes over the full input space, so the selected candidates almost never coincide with previously queried points~\citep{lin2022preferenceexploration}. This means each query can add a new disconnected component to the comparison graph, rather than connecting existing ones. Thus, the number of components grows with every query and the rank deficiency of likelihood Hessian worsens. Existing approaches either force the comparison graph to stay connected by constraining query selection~\citep{xu2024principled}, which limits exploration, or add uniform regularization to Hessian, which addresses the ill-conditioning at the expense of perturbing directions where the observed comparisons provide sufficient information. 

Here, we take a different approach: instead of altering the disconnected comparison graph, we look to modify the Hessian of PairwiseGP selectively. We propose KappaSharp, which improves conditioning of the Hessian by shaping its condition number ($\kappa$). Rather than regularizing all Hessian entries uniformly, a correction is added to its diagonal entries. The correction is larger where the prior uncertainty is high, and smaller where it is small. The correction directly affects the maximum a posteriori (MAP) estimation of the PairwiseGP posterior; not the query selection step. We additionally present an adaptive variant of KappaSharp that adaptively determines whether the correction must be applied based on PairwiseGP's confidence inferred from recent comparisons. \Cref{fig:hero} provides an overview of the comparison graph fragmentation problem and how KappaSharp addresses this challenge. 
KappaSharp does not modify the acquisition function or the querying strategy; thus, it can be applied as a drop-in addition to any PBO approach with minimal computational overhead.

\begin{figure}[h!]
    \centering
    \includegraphics[width=0.95\textwidth, trim=0 10 0 10, clip]{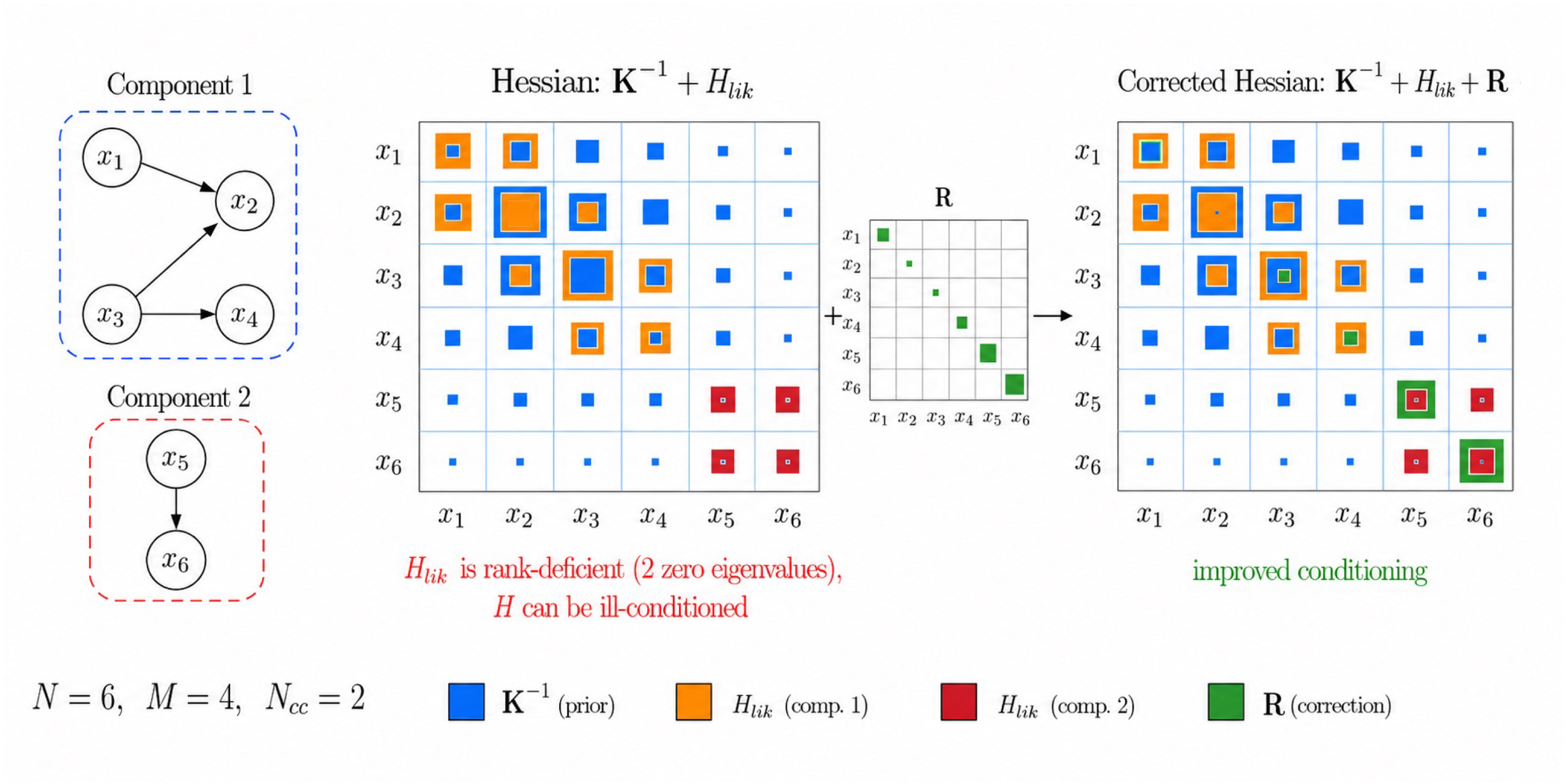}
    \vspace{-5mm}
    \caption{\textbf{Left:} An example comparison graph with $N{=}6$ candidates, $M{=}4$ pairwise comparisons, and $\Ncc{=}2$ disconnected components is shown alongside the corresponding Hessian $\mH = \Kinv + \Hlik$ and corrected Hessian $\mH + \Rmat$. Arrows indicate preference direction. \textbf{Middle:} The Hessian $\mH = \Kinv + \Hlik$. Blue cells show the prior precision $\Kinv$ (dense, all entries). Orange and red cells show the likelihood Hessian $\Hlik$, which contributes only within each connected component. The cross-component entries receive no contribution from the observed comparisons, so $\Hlik$ is rank-deficient and $\mH$ can be ill-conditioned. \textbf{Right:} The corrected Hessian $\mH + \Rmat$. Green diagonal cells show the KappaSharp correction $r_i = \eta^2/(\eta + (\Kinv)_{ii})$, which is larger where the prior precision is low. Square sizes reflect actual computed values for this example, illustrating that the correction is strongest where $\Kinv$ is weakest.}
    \label{fig:hero}
\end{figure}
\vspace{-5mm}
\paragraph{Contributions.} The main contributions of this work are as follows:
\begin{enumerate}
    \item[C1] We prove that standard PBO querying fragments the comparison graph, making the likelihood Hessian rank-deficient. This is a characteristic of the disconnected comparison graph and cannot be resolved by using alternative surrogate modelling approaches.

    \item[C2] We propose KappaSharp, a diagonal correction to the Hessian applied in MAP estimation of the PairwiseGP posterior, with larger corrections where the prior precision is low.

    \item[C3] We present an adaptive variant of KappaSharp that determines whether to apply the correction based on PairwiseGP's confidence inferred from recent comparisons.

    \item[C4] We evaluate the performance of KappaSharp on 11 benchmarks (5--20 dimensions) with 5 initializations and 60 seeds each, including a dose delivery problem in plasma medicine , in which PBO equipped with Adaptive KappaSharp improves over standard PBO without introducing losses.
\end{enumerate}

\section{Problem Setup}
\label{sec:problem}

We aim to find $x^* \in \argmax_{x \in \mathcal{X}} f(x)$ for a latent utility $f : \mathcal{X} \to \R$, $\mathcal{X} \subseteq \R^d$, that cannot be evaluated directly. At each iteration $t \in \{1, \ldots, T\}$, where $T$ is the total number of queries, we select a pair of candidates $(x_{t,a}, x_{t,b}) \in \mathcal{X} \times \mathcal{X}$ and observe only a binary preference. The user does not observe the utility values; instead, the preference is generated through noisy comparisons:
\begin{equation}
\label{eq:noisy_eval}
    y_{t,a} = f(x_{t,a}) + \epsilon_{t,a}, \quad y_{t,b} = f(x_{t,b}) + \epsilon_{t,b}, \quad \epsilon_{t,a}, \epsilon_{t,b} \overset{i.i.d.}{\sim} \mathcal{N}(0, \probitscale^2),
\end{equation}
with noise scale $\probitscale > 0$. The user reports $x_{t,a} \succ x_{t,b}$ if $y_{t,a} > y_{t,b}$. This yields the probit likelihood
\begin{equation}
\label{eq:probit}
    P(x_{t,a} \succ x_{t,b} \mid \vf) = \Phi\!\left(\frac{f(x_{t,a}) - f(x_{t,b})}{\probitscale\sqrt{2}}\right),
\end{equation}
where $\vf$ denotes the vector of latent utilities at all queried candidates and $\Phi$ is the standard normal cumulative distribution function (CDF). Each observation is recorded as an ordered pair $(x^+_t, x^-_t)$, where $x^+_t$ is the preferred candidate.

\section{Related Work}
\label{sec:related}

\paragraph{Preferential Bayesian Optimization.}
\citet{chu2005preference} introduced GP-based preference learning, and \citet{gonzalez2017preferential} formulated PBO as a sequential optimization approach. The EUBO acquisition function~\citep{lin2022preferenceexploration} and its batch variant qEUBO~\citep{astudillo2023qeubo} are currently the standard choice for query selection under the Laplace approximation. \citet{xu2024principled} provided the first regret bounds for PBO by enforcing a connected comparison graph, at the cost of reduced exploration. Alternative posterior approximations have also been proposed. \citet{benavoli2020skewgp} derived the exact skew-GP posterior; \citet{takeno2023practical} developed a practical implementation of SkewGP; and \citet{mikkola2020projective} proposed Gibbs sampling as an alternative. None of these developments addresses the rank deficiency problem identified in this work: when the comparison graph is disconnected, the observed comparisons provide no information linking different components (\Cref{prop:pbo_vulnerable}), regardless of the surrogate modelling approach. SkewGP also requires substantially more computation per posterior evaluation than the Laplace approximation.

\paragraph{Comparison Graphs and Spectral Structure.}
The connection between pairwise comparisons and graph Laplacians is investigated extensively in spectral ranking~\citep{jiang2011hodgerank}. \citet{shah2016estimation} showed that the Fisher information matrix for pairwise comparison models inherits the Laplacian structure. The fact that the probit Hessian takes a Laplacian form has been known since the work of \citet{chu2005preference}, but its impact on the PairwiseGP posterior under standard querying has not been analyzed. We show that EUBO with $q{=}2$ fragments the comparison graph into many disconnected components, causing the likelihood Hessian $\Hlik$ to lose rank, which will in turn make the full Hessian ill-conditioned (\Cref{sec:structural}).

\paragraph{Posterior Stabilization in Gaussian Processes.}
\citet{hartmann2019fisher} replaced the Hessian with the Fisher information matrix for GP classification. \citet{kundig2024iterative} introduced preconditioners for Vecchia-Laplace approximations. In continual learning, Elastic Weight Consolidation (EWC)~\citep{kirkpatrick2017overcoming} and online Laplace~\citep{immer2021improving} add diagonal regularization based on the curvature of the loss. These methods do not account for the graph-Laplacian rank structure that arises in pairwise comparison models. When $\Hlik$ is rank-deficient, the Hessian does not receive contribution from the observed comparisons in certain directions. Thus, the eigenvalues of the Hessian are determined solely by the prior precision, which can be much smaller than in directions where $\Hlik$ contributes. The proposed correction scheme addresses this imbalance by shaping the condition number of the Hessian.

\section{Structural Analysis}
\label{sec:structural}

The iterative PBO procedure starts from $N_0$ initial candidate points $x_1, \ldots, x_{N_0} \in \mathcal{X}$ with $M_0$ comparisons among them, forming $N_{\mathrm{cc},0}$ connected components in the comparison graph. As discussed in \Cref{sec:intro}, under EUBO with batch size $q{=}2$ (two candidate points per query), each subsequent query selects two new candidate points independent of all previous ones. After $t{-}1$ such queries, we have $N_{t-1} = N_0 + 2(t{-}1)$ candidates and $M_{t-1} = M_0 + (t{-}1)$ comparisons. With a slight abuse of notation, we write $N = N_{t-1}$ and $M = M_{t-1}$ for the number of candidates and comparisons at iteration $t$, respectively.

A GP prior $\vf \sim \mathcal{N}(\mathbf{0}, \mK)$ is placed over the latent utility values $\vf = (f(x_1), \ldots, f(x_N))^\top$, where $\mK \in \R^{N \times N}$ with $K_{ij} = k(x_i, x_j)$ for a kernel function $k$. Accordingly, the $m$-th comparison records that $x^+_m$ was preferred over $x^-_m$. We encode all $M$ comparisons in an incidence matrix $\mC \in \R^{M \times N}$, where row $m$ has $+1$ in the column corresponding to $x^+_m$, $-1$ in the column corresponding to $x^-_m$, and zero elsewhere. Let $\mathcal{D} = \{(x^+_m, x^-_m)\}_{m=1}^M$ denote the set of observed pairwise comparisons, where each pair records the preferred and non-preferred candidate. The Laplace approximation gives the posterior $p(\vf \mid \mathcal{D}) \approx \mathcal{N}(\hat{\vf}, \mH^{-1})$, where $\hat{\vf} = \argmin_{\vf}\, -\log p(\mathcal{D} \mid \vf) + \tfrac{1}{2}\vf^\top \mK^{-1} \vf$ is the MAP estimate of $\vf$ and
\begin{equation}
\label{eq:precision}
    \mH = \mK^{-1} + \underbrace{\mC^\top \mW \mC}_{\Hlik}
\end{equation}
is the posterior precision matrix, with $\mW = \diag(\omega_1, \ldots, \omega_M)$ a diagonal matrix of positive weights that depend on $\hat{\vf}$, as defined in \Cref{prop:graph_laplacian}. The likelihood Hessian $\Hlik$ has a specific structure.

\begin{proposition}[$\Hlik$ is a weighted graph Laplacian]
\label{prop:graph_laplacian}
For every $\vf \in \R^N$ with finite comparison scores, the data-dependent component of the Laplace precision, $\Hlik(\vf) =
-\nabla^2_{\vf} \log p(\mathcal{D} \mid \vf)$, satisfies
\begin{equation}
\label{eq:hlik}
    \Hlik(\vf) = \mC^\top \mW(\vf)\, \mC,
\end{equation}
where $\mW = \diag(\omega_1, \ldots, \omega_M)$ with $\omega_m = (2\probitscale^2)^{-1}\psi(z_m)\bigl(\psi(z_m) + z_m\bigr) > 0$, $z_m = \mC_m^\top \vf / (\probitscale\sqrt{2})$, $\phi$ is the standard normal probability density function (PDF), $\Phi$ is the CDF, and $\psi(z) = \phi(z)/\Phi(z)$ is the inverse Mills ratio.
The rank of $\Hlik$ is
\begin{equation}
    \rank(\Hlik) = N - \Ncc,
\end{equation}
where $\Ncc$ is the number of connected components of the comparison graph.
\end{proposition}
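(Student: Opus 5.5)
The proof has three parts: write the log-likelihood as a sum of scalar functions of the projections $\mC_m^\top\vf$, show each curvature weight $\omega_m$ is strictly positive, and read off the rank from the incidence structure of $\mC$.

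\emph{Hessian formula.} By \eqref{eq:probit} and conditional independence of the comparisons,
\[
\log p(\mathcal{D}\mid\vf)=\sum_{m=1}^M \log\Phi(z_m), \qquad z_m=\mC_m^\top\vf/(\probitscale\sqrt2).
\]
Each summand depends on $\vf$ only through the linear functional $\mC_m^\top\vf$, so the chain rule gives
\[
\nabla^2_{\vf}\log\Phi(z_m)=\frac{1}{2\probitscale^2}\,(\log\Phi)''(z_m)\,\mC_m\mC_m^\top .
\]
Using $\phi'(z)=-z\phi(z)$, we get $(\log\Phi)'=\psi$ and $(\log\Phi)''=\psi'=-\psi(\psi+z)$. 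Summing over $m$ and negating yields $\Hlik=\sum_m\omega_m\mC_m\mC_m^\top=\mC^\top\mW\mC$, with $\omega_m$ exactly as stated.

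\emph{Positivity of the weights.} We need $\psi(z)+z>0$ for all finite $z$, since $\psi>0$ trivially.
\begin{itemize}
\item For $z\ge0$ this is immediate.
\item For $z<0$ it is the classical Mills-ratio bound $\Phi(z)<\phi(z)/|z|$. It follows from $\Phi(z)=\int_{-\infty}^z\phi(t)\,dt<\int_{-\infty}^z \frac{-t}{|z|}\phi(t)\,dt=\phi(z)/|z|$, using $-t/|z|>1$ for $t<z<0$.
\end{itemize}
Equivalently, $\log\Phi$ is strictly concave. This is the only analytic step and the one I expect to be the main obstacle, though it is standard. The hypothesis ``finite comparison scores'' is exactly what keeps $z_m$ finite, so each $\omega_m$ is a finite positive number.

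\emph{Rank.} Since $\mW$ is positive definite, $\vf^\top\Hlik\vf=\|\mW^{1/2}\mC\vf\|^2$, so $\kerfn(\Hlik)=\kerfn(\mC)$. A vector $\vf$ lies in $\kerfn(\mC)$ iff $f_i=f_j$ for every edge $(i,j)$ of the comparison graph. Propagating along paths, this holds iff $\vf$ is constant on each connected component.

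The indicator vectors $\mathbf 1_{S_1},\dots,\mathbf 1_{S_{\Ncc}}$ of the components have disjoint supports, so they are linearly independent, and they span $\kerfn(\mC)$. An isolated candidate with no comparisons counts as its own component. Hence $\dim\kerfn(\Hlik)=\Ncc$, and rank--nullity gives $\rank(\Hlik)=N-\Ncc$.
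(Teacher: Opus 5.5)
Your proposal is correct and follows essentially the same route as the paper: the chain rule gives $\omega_m\mC_m\mC_m^\top$ per comparison, $\psi(z)+z>0$ follows from the Mills-ratio bound $\Phi(z)<\phi(z)/|z|$ for $z<0$, and $\kerfn(\mC^\top\mW\mC)=\kerfn(\mC)$ is spanned by the component indicators, which gives the rank by rank--nullity. Your integral derivation of the Mills bound and the path-propagation argument for $\kerfn(\mC)$ just make explicit two steps that the paper cites as standard.
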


Within each connected component, $\Hlik$ constrains relative utility differences. That is, the likelihood cannot determine how utilities in one component relate to utilities in another. Proofs for all propositions and theorems in this section and \Cref{sec:method} are given in \Cref{app:proofs}.

\begin{proposition}[Likelihood-null directions]
\label{prop:pbo_vulnerable}
Let $\kerfn(\mC) = \{\vf \in \R^N : \mC \vf = \mathbf{0}\}$ be the null space of $\mC$ and $\mathrm{range}(\mC^\top)$ be its orthogonal complement. Let $\mathbf{Q}_0 \in \R^{N \times \Ncc}$ and $\mathbf{Q}_1 \in \R^{N \times (N-\Ncc)}$ be orthonormal bases for $\kerfn(\mC)$ and $\mathrm{range}(\mC^\top)$, respectively. Decompose the latent utility vector as $\vf = \mathbf{Q}_0 \xi + \mathbf{Q}_1 \zeta$, where $\xi \in \R^{\Ncc}$ are the null-space coordinates and $\zeta \in \R^{N-\Ncc}$ are the range-space coordinates. Then:
\begin{enumerate}
\item[(a)] The likelihood does not depend on $\xi$: $p(\mathcal{D} \mid \xi, \zeta) = p(\mathcal{D} \mid \zeta)$.
\item[(b)] Under the Laplace approximation, the posterior over $\xi$ given $\zeta$ equals the prior: $p_L(\xi \mid \zeta, \mathcal{D}) = p_0(\xi \mid \zeta)$.
\end{enumerate}
\end{proposition}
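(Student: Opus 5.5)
Part (a) follows directly from how the likelihood is built. The probit likelihood \eqref{eq:probit} depends on $\vf$ only through the comparison scores $z_m = \mC_m^\top \vf/(\probitscale\sqrt{2})$, so $p(\mathcal{D}\mid\vf) = \prod_{m}\Phi(z_m)$ is a function of $\mC\vf$ alone. Substituting the decomposition gives $\mC\vf = \mC\mathbf{Q}_0\xi + \mC\mathbf{Q}_1\zeta = \mC\mathbf{Q}_1\zeta$, because the columns of $\mathbf{Q}_0$ lie in $\kerfn(\mC)$. Hence the likelihood is independent of $\xi$, and $p(\mathcal{D}\mid\xi,\zeta)=p(\mathcal{D}\mid\zeta)$. \Cref{prop:graph_laplacian} makes the dimension count consistent: $\rank(\mC)=\rank(\Hlik)=N-\Ncc$, since $\mW\succ 0$ gives $\kerfn(\Hlik)=\kerfn(\mC)$. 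The orthogonal change of variables $\vf\mapsto(\xi,\zeta)$ is therefore well defined and invertible.

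For part (b) I will work entirely in the rotated coordinates. Because $[\mathbf{Q}_0\ \mathbf{Q}_1]$ is orthogonal, the prior on $(\xi,\zeta)$ is Gaussian with covariance blocks $\mathbf{Q}_i^\top\mK\mathbf{Q}_j$ and precision blocks $\mathbf{P}_{ij}=\mathbf{Q}_i^\top\mK^{-1}\mathbf{Q}_j$. The Laplace posterior is the Gaussian obtained by a second-order expansion of $-\log p(\mathcal{D}\mid\vf) + \tfrac12\vf^\top\mK^{-1}\vf$ around $\hat\vf$. By (a), the likelihood term and its Taylor expansion are functions of $\zeta$ only. The likelihood Hessian in rotated coordinates has zero blocks everywhere except the $\zeta\zeta$ block, since $\mathbf{Q}_0^\top\Hlik=\mathbf{Q}_0^\top\mC^\top\mW\mC=0$. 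The gradient term also involves only $\zeta$, because $\mathbf{Q}_0^\top\mC^\top=0$. So the Laplace log-density is
\[
\log p_L(\xi,\zeta\mid\mathcal{D}) = -\tfrac12 [\xi;\zeta]^\top \mathbf{P}[\xi;\zeta] + g(\zeta) + \mathrm{const},
\]
where $g$ is the quadratic surrogate of the log-likelihood, a function of $\zeta$ alone.

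Conditioning on $\zeta$, every term in $g(\zeta)$ is constant. The $\xi$-dependent part, $-\tfrac12\xi^\top\mathbf{P}_{00}\xi - \xi^\top\mathbf{P}_{01}\zeta$, is exactly the $\xi$-dependent part of the prior log-density $\log p_0(\xi,\zeta)$. Both conditionals are therefore Gaussian with precision $\mathbf{P}_{00}$ and mean $-\mathbf{P}_{00}^{-1}\mathbf{P}_{01}\zeta$, which proves $p_L(\xi\mid\zeta,\mathcal{D})=p_0(\xi\mid\zeta)$. The same argument shows that this holds for the exact posterior too, since Bayes' rule multiplies the prior by a factor depending only on $\zeta$.

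The only real obstacle is bookkeeping: stating the Laplace approximation carefully enough that it is clearly a joint Gaussian whose non-prior part depends only on $\zeta$. In particular I must check that the expansion point $\hat\vf$ does not introduce a $\xi$-dependence through the linear term. I will handle this by expanding the likelihood as a function of $\mC\vf$ rather than of $\vf$, which makes the independence from $\xi$ manifest. I also need $\mathbf{P}_{00}\succ0$, which holds because $\mK^{-1}\succ0$.
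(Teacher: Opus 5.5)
Your proof is correct, and part (a) matches the paper's argument essentially verbatim. For part (b) you take a somewhat different route.

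The paper writes the Laplace precision in rotated coordinates as $\bigl[\begin{smallmatrix} \mathbf{A} & \mathbf{B} \\ \mathbf{B}^\top & \mathbf{D}_0 + \mathbf{L} \end{smallmatrix}\bigr]$; your $\mathbf{P}_{00}$ and $\mathbf{P}_{01}$ are its $\mathbf{A}$ and $\mathbf{B}$. It then reads off the conditional of $\xi$ given $\zeta$ under $\mathcal{N}(\hat\vf,\mH^{-1})$: covariance $\mathbf{A}^{-1}$ and mean $\hat\xi - \mathbf{A}^{-1}\mathbf{B}(\zeta-\hat\zeta)$. Finally it invokes the $\xi$-component of the MAP stationarity condition, $\mathbf{A}\hat\xi + \mathbf{B}\hat\zeta = \mathbf{0}$, to collapse that mean to $-\mathbf{A}^{-1}\mathbf{B}\zeta$.

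You instead expand only the log-likelihood and keep the prior quadratic exact. The Laplace density then factors as the prior times a $\zeta$-only factor, and the conclusion follows by the same reasoning as for the exact posterior, with no separate mean computation.

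Your version is more structural. It shows that any approximation replacing the likelihood by a function of $\mC\vf$ leaves the prior conditional intact. Stationarity of $\hat\vf$ enters only implicitly, when you identify the expanded density with $\mathcal{N}(\hat\vf,\mH^{-1})$.

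The paper's version makes that reliance on the first-order condition explicit, and this becomes informative later. At the shifted center $\hat\vf_\eta$ used by MAP-only KappaSharp, the same block computation still gives conditional covariance $\mathbf{A}^{-1}$, which is what \Cref{thm:pseudo_confidence} uses. However, the conditional mean picks up the extra term $-\mathbf{A}^{-1}\mathbf{Q}_0^\top\Rmat\hat\vf_\eta$. This term is generally nonzero, so the conditional mean no longer matches the prior's.
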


The null-space coordinates $\xi$ represent a constant shift in the utility of all candidates within the same connected component, i.e., a group of candidate points $x$ that are linked through a chain of pairwise comparisons. Since comparisons only measure utility \emph{differences} within a pair, such shifts do not change any comparison outcome, and the posterior over $\xi$ is determined entirely by the GP prior.

\paragraph{Comparison graph fragmentation under EUBO.}
Given the posterior $\mathcal{N}(\hat{\vf}, \mH^{-1})$, the EUBO acquisition function with $q{=}2$ selects the pair $(x_a, x_b)$ that maximizes the expected utility of the best option in the pair. For a pair with posterior mean difference $\delta = \hat{f}(x_a) - \hat{f}(x_b)$ and predictive standard deviation $s = \sqrt{\mathrm{Var}[f(x_a) - f(x_b) \mid \mathcal{D}]}$, the EUBO score is then
\begin{equation}
\label{eq:eubo}
    \mathrm{EUBO}(x_a, x_b) = \delta\,\Phi(\delta/s) + s\,\phi(\delta/s),
\end{equation}
In continuous decision domains, these candidates are almost always new points~\citep{lin2022preferenceexploration}. Thus, after $T$ queries from $N_0$ initial points with $N_{\mathrm{cc},0}$ connected components, there are $N_0 + 2T$ candidates in up to $N_{\mathrm{cc},0} + T$ components.

We now quantify how much the PairwiseGP surrogate can learn from the observed pairwise comparisons using the \emph{effective degrees of freedom} $d_{\mathrm{eff}} \in [0, N)$. Let $\tilde{\mM} = \mK^{1/2}\Hlik\mK^{1/2}$, which captures how much information the observed comparisons add beyond the prior in each direction of the latent utility space. Each eigenvalue $\mu_i$ of $\tilde{\mM}$ corresponds to one such direction, and $\mu_i/(1+\mu_i)$ is close to 1 when the comparisons dominate and close to 0 when the prior dominates. $d_{\mathrm{eff}} = \sum_i \mu_i/(1+\mu_i)$ counts how many directions are effectively informed by the observed comparisons.

\begin{proposition}[Effective information dimension]
\label{prop:info_dim}
Given $\tilde{\mM}$ and $d_{\mathrm{eff}}$ as defined above:
\begin{enumerate}
\item[(a)] $d_{\mathrm{eff}} \leq \rank(\Hlik) = N - \Ncc$, with equality only in the limit $\mu_i \to \infty$.
\item[(b)] Under EUBO $q{=}2$, assume each iteration queries a comparison between two previously unseen points and the initial $N_0$ points form $N_{\mathrm{cc},0}$ connected components. Then, $N = 2T + N_0$, $\Ncc = N_{\mathrm{cc},0} + T$, and $d_{\mathrm{eff}}/N \leq (N_0 + T - N_{\mathrm{cc},0})/(N_0 + 2T)$, which approaches $1/2$ as $T \to \infty$.
\item[(c)] Under connected querying ($\Ncc{=}1$): $d_{\mathrm{eff}} \leq N{-}1$.
\item[(d)] With any positive-definite augmentation $\Rmat \succ 0$ satisfying $\mK^{1/2}\Rmat\mK^{1/2} \succeq \beta\mI$ for some scalar $\beta > 0$, define $d_{\mathrm{eff}}^{\mathrm{aug}}$ as the effective degrees of freedom computed from $\mH + \Rmat$ (i.e., replacing $\tilde{\mM}$ by $\mK^{1/2}(\Hlik + \Rmat)\mK^{1/2}$). Then:
\begin{equation}
\label{eq:deff_recovery}
d_{\mathrm{eff}}^{\mathrm{aug}} \geq d_{\mathrm{eff}} + \frac{\Ncc\beta}{1+\beta}.
\end{equation}
\end{enumerate}
\end{proposition}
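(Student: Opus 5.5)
The plan is to work throughout in the whitened coordinates $\tilde{\mM} = \mK^{1/2}\Hlik\mK^{1/2}$, which is congruent to $\Hlik$. For (a), Sylvester's law of inertia (or simply invertibility of $\mK^{1/2}$) gives $\rank(\tilde{\mM}) = \rank(\Hlik) = N - \Ncc$ by \Cref{prop:graph_laplacian}. Since $\tilde{\mM}\succeq 0$, exactly $\Ncc$ of its eigenvalues $\mu_i$ vanish and contribute nothing to $d_{\mathrm{eff}}$. Each of the remaining $N-\Ncc$ eigenvalues is finite and positive, so it contributes $\mu_i/(1+\mu_i) < 1$. Summing gives $d_{\mathrm{eff}} < N-\Ncc$ whenever $N-\Ncc \ge 1$, and the bound is attained only as every positive $\mu_i \to \infty$. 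Part (c) is the special case $\Ncc = 1$.

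For (b), I will count directly. $T$ queries add $2T$ fresh nodes and $T$ edges, each joining the two new nodes to each other only. Each query therefore creates exactly one new component and leaves the old ones untouched. This gives $N = N_0+2T$ and $\Ncc = N_{\mathrm{cc},0}+T$. Substituting into (a) yields $d_{\mathrm{eff}}/N \le (N_0 + T - N_{\mathrm{cc},0})/(N_0+2T)$. Dividing numerator and denominator by $T$ and letting $T\to\infty$ shows the bound tends to $1/2$.

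For (d), set $\tilde{\mB} = \mK^{1/2}\Rmat\mK^{1/2} \succeq \beta\mI$, so that $\tilde{\mM}^{\mathrm{aug}} = \tilde{\mM} + \tilde{\mB}$. Order eigenvalues increasingly, $\mu_1\le\dots\le\mu_N$ for $\tilde{\mM}$ and $\mu_i^{\mathrm{aug}}$ for $\tilde{\mM}^{\mathrm{aug}}$. Weyl's inequality gives $\mu_i^{\mathrm{aug}} \ge \mu_i + \beta$ for every $i$. The map $g(\mu) = \mu/(1+\mu)$ is increasing, so $d^{\mathrm{aug}}_{\mathrm{eff}} \ge \sum_i g(\mu_i+\beta)$. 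We then split the sum:
\begin{itemize}
\item On the $\Ncc$ indices with $\mu_i = 0$, each term equals $g(\beta) = \beta/(1+\beta)$, while the corresponding term of $d_{\mathrm{eff}}$ is $0$. These indices produce the claimed gain $\Ncc\beta/(1+\beta)$.
\item On the remaining indices, $g(\mu_i+\beta) \ge g(\mu_i)$ by monotonicity, so those terms at least reproduce $d_{\mathrm{eff}}$.
\end{itemize}
Adding the two parts gives \eqref{eq:deff_recovery}.

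The main point needing care is the eigenvalue pairing in (d). I must use the index-wise Weyl bound on sorted eigenvalues rather than a basis-dependent argument, since $\tilde{\mB}$ need not commute with $\tilde{\mM}$. With increasing ordering, the $\Ncc$ zero eigenvalues of $\tilde{\mM}$ are exactly the first $\Ncc$ indices, and Weyl applies to each index separately, so the split is valid. A secondary point is that $d_{\mathrm{eff}}^{\mathrm{aug}}$ is defined through the same formula $\sum g(\cdot)$ applied to the augmented whitened matrix. This is the reading I take from the statement's parenthetical.
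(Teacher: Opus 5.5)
Your proposal is correct and follows the paper's proof essentially step by step. Parts (a)--(c) use the same zero-eigenvalue count, and for (d) you make the same reduction: compare $\tilde{\mM}+\tilde{\mB}$ with $\tilde{\mM}+\beta\mI$, then split off the $\Ncc$ zero modes. The only cosmetic difference is in (d). You pass from the Loewner bound to eigenvalues index-wise via Weyl and then use scalar monotonicity of $\mu/(1+\mu)$. The paper instead uses Loewner monotonicity of $\mathbf{X}\mapsto\tr\bigl(\mathbf{X}(\mI+\mathbf{X})^{-1}\bigr)$ and diagonalizes $\tilde{\mM}+\beta\mI$ directly. The two routes are interchangeable.
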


In \Cref{prop:info_dim}, part~(a) indicates that each disconnected component removes one effective degree of freedom. Part~(b) indicates that under standard EUBO, at most half the degrees of freedom are informed by the observed pairwise comparisons. Part~(c) indicates that connected querying removes this ceiling. Part~(d) indicates that adding a positive-definite matrix to the Hessian can increase $d_{\mathrm{eff}}$ algebraically, but this does not represent real information from comparisons (see \Cref{sec:method}).

\section{Condition-Number Shaping of Hessian}
\label{sec:method}

In standard PBO, the posterior of PairwiseGP is estimated via MAP estimation in every iteration. Let $J_0(\vf) = -\log p(\mathcal{D} \mid \vf) + \tfrac{1}{2}\vf^\top \Kinv \vf$ denote the negative log-posterior up to a constant, and let $\hat{\vf}_0 = \argmin_{\vf} J_0(\vf)$ be the standard MAP estimate. As shown in \Cref{sec:structural}, the likelihood Hessian $\Hlik$ is rank-deficient when the comparison graph is disconnected, which can make the full Hessian $\mH = \Kinv + \Hlik(\hat{\vf}_0)$ ill-conditioned. We propose to replace the standard MAP estimation with the augmented problem
\begin{equation}
\label{eq:augmented_map}
    \hat{\vf}_\eta = \argmin_{\vf}\, J_0(\vf) + \tfrac{1}{2}\vf^\top \Rmat(\eta)\, \vf,
\end{equation}
where $\Rmat(\eta) = \diag(r_1(\eta), \ldots, r_N(\eta))$ is a diagonal correction matrix that adds a penalty to each diagonal entry of the Hessian. The scalar $\eta > 0$ controls the overall correction strength. After solving for $\hat{\vf}_\eta$, we evaluate the Hessian of $J_0$ (not the augmented objective) at $\hat{\vf}_\eta$, denoted as $\mH_\eta = \Kinv + \Hlik(\hat{\vf}_\eta)$, and use its inverse as the Laplace covariance: $\Sigma_{\mathrm{MO}} = \mH_\eta^{-1}$. That is, $\Rmat$ shifts the MAP from $\hat{\vf}_0$ to $\hat{\vf}_\eta$ but does not enter the covariance. We describe the design of $\Rmat$ next and justify the deployment of MAP-only in the following.

\subsection{KappaSharp Correction}
\label{sec:filter}

Each diagonal entry $r_i(\eta)$ depends on the prior precision $(\Kinv)_{ii}$ at the $i$-th candidate point.

\begin{proposition}[Correction properties]
\label{prop:filter_properties}
Define $a_i = (\Kinv)_{ii} > 0$, which denotes the prior precision at the $i$-th candidate point, and let
\begin{equation}
\label{eq:filter}
    r_i(\eta) = \frac{\eta^2}{\eta + a_i}.
\end{equation}
Then, for $\eta > 0$:
\begin{enumerate}
    \item[(a)] $\partial r_i / \partial a_i < 0$: larger prior precision $\to$ less correction;
    \item[(b)] $0 < r_i < \eta$: the correction is bounded;
    \item[(c)] as $a_i \to 0$: $r_i \to \eta$ (maximum correction where the prior is weak);
    \item[(d)] as $a_i \to \infty$: $r_i \to 0$ (no correction where the prior already dominates).
\end{enumerate}
\end{proposition}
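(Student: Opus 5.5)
The plan is to treat $r_i(\eta) = \eta^2/(\eta + a_i)$ as an elementary function of the single variable $a_i \in (0,\infty)$, with $\eta > 0$ held fixed, and to verify the four properties one at a time by direct computation. No result from \Cref{sec:structural} is needed. The only input from the setup is that $a_i = (\Kinv)_{ii} > 0$, which holds because $\mK$ is positive definite. Hence $\Kinv$ is positive definite, and its diagonal entries are $e_i^\top \Kinv e_i > 0$. With this, the denominator $\eta + a_i$ is strictly positive and strictly larger than $\eta$, so $r_i$ is well defined and smooth in $a_i$.

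For (a), differentiate with respect to $a_i$ to get $\partial r_i/\partial a_i = -\eta^2/(\eta + a_i)^2$. This is strictly negative because $\eta > 0$.

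For (b), positivity follows because the numerator $\eta^2$ and the denominator $\eta + a_i$ are both positive. For the upper bound, write
\[
r_i = \eta \cdot \frac{\eta}{\eta + a_i}.
\]
Since $a_i > 0$, the fraction lies strictly between $0$ and $1$, so $r_i < \eta$. Equivalently, $\eta - r_i = \eta a_i/(\eta + a_i) > 0$.

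For (c) and (d), use continuity of $a \mapsto \eta^2/(\eta + a)$ on $[0,\infty)$. As $a_i \to 0^+$ the expression tends to $\eta^2/\eta = \eta$. As $a_i \to \infty$ the denominator diverges while the numerator stays fixed, so $r_i \to 0$. One can make (d) quantitative with the bound $0 < r_i < \eta^2/a_i$.

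None of these steps is a real obstacle. The only point that needs care is stating precisely why $a_i > 0$: it follows from positive definiteness of the kernel matrix, which the GP prior $\mathcal{N}(\mathbf{0},\mK)$ with invertible $\mK$ already presupposes. Using $a_i > 0$ strictly is also what makes the inequalities in (b) strict, so that $r_i$ attains neither $0$ nor $\eta$ and these values are reached only in the limits of (c) and (d).
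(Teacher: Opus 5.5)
Your proposal is correct and follows essentially the same route as the paper's proof: you differentiate in $a_i$ for (a), use positivity together with $\eta - r_i = \eta a_i/(\eta + a_i) > 0$ for (b), and take limits of $\eta/(\eta + a_i)$ for (c)--(d). The paper additionally records monotonicity in $\eta$ and a variational characterization of $r_i$, but neither is needed for the four stated properties.
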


The correction uses only the diagonal entries $(\Kinv)_{ii}$, not the likelihood Hessian $\Hlik$ directly. Adding $\Rmat$ to the Hessian is equivalent to strengthening the prior precision from $\Kinv$ to $\Kinv + \Rmat$. The ill-conditioning of $\mH = \Kinv + \Hlik$ is worst in directions where $\Hlik$ contributes nothing and $\Kinv$ is weak. Under EUBO querying in continuous domains, these two conditions tend to occur simultaneously: newly queried points that form isolated components in the comparison graph tend to have low kernel correlation with existing points, so both their likelihood contribution and their prior uncertainty are high. By applying larger corrections where $(\Kinv)_{ii}$ is small, $\Rmat$ boosts the weakest eigenvalues of $\mH$, improving its conditioning without requiring direct access to the structure of $\Hlik$. The overall correction strength $\etastar$ is calibrated using a condition-number target on the full Hessian $\mH + \Rmat$, which accounts for $\Hlik$ indirectly.  The condition number $\kappafn(\mH) = \lambda_{\max}(\mH)/\lambda_{\min}(\mH)$ measures the ratio between the largest and smallest eigenvalues of $\mH$. A large $\kappafn(\mH)$ indicates that some directions are much better determined than others. We choose $\etastar$ as the smallest $\eta$ that reduces $\kappafn(\mH + \Rmat(\eta))$ to a target level, which we call $\kappafn$-calibration.
\label{sec:calibration}

\begin{proposition}[$\kappafn$-calibration is well-posed]
\label{prop:calibration}
Let $\mH = \mK^{-1} + \Hlik(\hat{\vf}_0)$ be the Hessian at $\hat{\vf}_0$ and $\alpha \in (0,1)$.
The correction $\Rmat(\eta)$ is continuous, positive semi-definite, and monotonically increasing in $\eta$ (i.e., $\eta' > \eta \Rightarrow \Rmat(\eta') \succeq \Rmat(\eta)$); $\kappafn(\mH + \Rmat(\eta)) \to 1$ as $\eta \to \infty$; and therefore
\begin{equation}
\label{eq:calibration}
    \etastar = \underset{\eta}{\min}\;\left\{\eta \geq 0 : \kappafn(\mH + \Rmat(\eta)) \leq \kappafn(\mH)^{1-\alpha}\right\}
\end{equation}
exists and is finite.
\end{proposition}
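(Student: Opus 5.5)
The plan is to verify the three properties of $\Rmat(\eta)$ by direct computation on the diagonal entries. Then we show that the condition number tends to one by an asymptotic expansion of $\Rmat(\eta)$. Finally, we obtain existence of $\etastar$ from continuity and a closedness argument. Monotonicity of $\kappafn$ in $\eta$ is not needed.

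\emph{Properties of the correction.} Each entry $r_i(\eta)=\eta^2/(\eta+a_i)$ with $a_i=(\Kinv)_{ii}>0$ is a rational function of $\eta$. Its denominator is positive for $\eta\ge 0$, so it is continuous on $[0,\infty)$, and $r_i(0)=0$. Its derivative is
\begin{equation*}
\frac{\partial r_i}{\partial\eta}=\frac{2\eta(\eta+a_i)-\eta^2}{(\eta+a_i)^2}=\frac{\eta(\eta+2a_i)}{(\eta+a_i)^2}\ \ge 0 .
\end{equation*}
Hence $\eta'>\eta$ implies $r_i(\eta')\ge r_i(\eta)$ for every $i$. Since $\Rmat(\eta')-\Rmat(\eta)$ is diagonal with nonnegative entries, we get $\Rmat(\eta')\succeq\Rmat(\eta)$. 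Positive semi-definiteness follows from $r_i\ge 0$, with strict positivity for $\eta>0$ by \Cref{prop:filter_properties}(b).

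\emph{Condition number tends to one.} Polynomial division gives $r_i(\eta)=\eta-a_i+a_i^2/(\eta+a_i)$. Therefore
\begin{equation*}
\Rmat(\eta)=\eta\mI-\diag(a_1,\dots,a_N)+\mathbf{E}(\eta),\qquad \|\mathbf{E}(\eta)\|\le \max_i a_i^2/\eta\to 0 .
\end{equation*}
It follows that $\mH+\Rmat(\eta)=\eta\mI+\mB(\eta)$, where $\mB(\eta)=\mH-\diag(a_i)+\mathbf{E}(\eta)$ is symmetric with $\|\mB(\eta)\|\le b$ for some constant $b$ and all $\eta\ge 1$. By Weyl's inequality, every eigenvalue of $\mH+\Rmat(\eta)$ lies in $[\eta-b,\eta+b]$. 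Consequently
\begin{equation*}
1\le\kappafn(\mH+\Rmat(\eta))\le\frac{\eta+b}{\eta-b}\to 1 \quad\text{as } \eta\to\infty .
\end{equation*}

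\emph{Existence of $\etastar$.} First note that $\mH\succ 0$, because $\Kinv\succ 0$ and $\Hlik=\mC^\top\mW\mC\succeq 0$ by \Cref{prop:graph_laplacian}. Adding $\Rmat(\eta)\succeq 0$ keeps $\lmin(\mH+\Rmat(\eta))\ge\lmin(\mH)>0$. The extreme eigenvalues of a symmetric matrix depend continuously on its entries, and the entries of $\Rmat(\eta)$ are continuous in $\eta$. Hence $g(\eta)=\kappafn(\mH+\Rmat(\eta))$ is continuous on $[0,\infty)$. Let $S=\{\eta\ge 0: g(\eta)\le\kappafn(\mH)^{1-\alpha}\}$; we split into two cases.
\begin{enumerate}
\item[(i)] If $\kappafn(\mH)=1$, then $0\in S$, so $\etastar=0$.
\item[(ii)] If $\kappafn(\mH)>1$, then the target $\kappafn(\mH)^{1-\alpha}>1$ because $\alpha<1$. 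Since $g(\eta)\to 1$, the set $S$ contains all sufficiently large $\eta$ and is therefore nonempty. By continuity of $g$, $S$ is closed in $[0,\infty)$. A nonempty closed subset of $[0,\infty)$ is bounded below and attains its infimum. Hence $\etastar=\min S$ exists and is finite.
\end{enumerate}

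The main obstacle is the limit $\kappafn\to 1$. The correction is not a multiple of the identity, so one cannot just factor out $\eta$. Instead one needs the expansion $r_i=\eta-a_i+O(1/\eta)$, which shows the non-scalar part of $\Rmat(\eta)$ stays bounded, together with Weyl's inequality to control the spread of the spectrum. The remaining steps are routine continuity arguments.
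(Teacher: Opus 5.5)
Your proof is correct and follows essentially the same route as the paper. It checks $r_i(\eta)$ entrywise for continuity, positivity and Loewner monotonicity, uses Weyl-type spectral bounds to get $\kappafn(\mH+\Rmat(\eta))\to 1$, and obtains $\etastar$ from continuity plus closedness of the sublevel set. The differences are cosmetic: the paper bounds the extreme eigenvalues directly via $\max_i r_i(\eta)\le\eta$ and $\min_i r_i(\eta)=\eta^2/(\eta+\max_i a_i)$, so the $O(1/\eta)$ expansion you flag as the main obstacle is convenient but not essential; and your observation that nonemptiness follows directly from the limit makes the paper's appeal to the intermediate value theorem unnecessary.
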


We set $\alpha = 0.1$ throughout, targeting $\kappafn(\mH)^{0.9}$. A sensitivity analysis over $\alpha$ is shown in \Cref{app:alpha_sensitivity}.

\paragraph{Bounded MAP shift.}
If the correction shifts the MAP too far from the original estimate, the resulting posterior may no longer reflect the observed pairwise comparisons. The following proposition bounds the shift $\Delta = \hat{\vf}_\eta - \hat{\vf}_0$.

\begin{proposition}[Bounded MAP shift]
\label{prop:bias_bound}
The shift $\Delta = \hat{\vf}_\eta - \hat{\vf}_0$ satisfies $\normH{\Delta} \leq \frac{\mu_{\max}}{1 + \mu_{\max}} \normH{\hat{\vf}_0}$, where $\|\cdot\|_{\bar{\mH}}$ is the norm induced by the path-averaged Hessian $\bar{\mH}$ of $J_0$, and $\mu_{\max}$ is the largest eigenvalue of $\bar{\mH}^{-1/2}\Rmat\bar{\mH}^{-1/2}$ (the path-averaged Hessian and its norm are defined in the proof in \Cref{app:proofs}).
\end{proposition}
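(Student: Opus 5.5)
We compare the first-order optimality conditions of the standard and augmented MAP problems and link them through an integral (mean-value) form of the gradient of $J_0$. Both $J_0$ and $J_0 + \tfrac12\vf^\top\Rmat\vf$ are strictly convex, since the probit negative log-likelihood is convex and $\Kinv \succ 0$. Hence both minimizers are unique and characterized by
\begin{equation*}
\nabla J_0(\hat{\vf}_0) = \mathbf{0}, \qquad \nabla J_0(\hat{\vf}_\eta) + \Rmat\,\hat{\vf}_\eta = \mathbf{0}.
\end{equation*}
Subtracting and applying the fundamental theorem of calculus along the segment $\hat{\vf}_0 + s\Delta$, $s\in[0,1]$, gives $\nabla J_0(\hat{\vf}_\eta) - \nabla J_0(\hat{\vf}_0) = \bar{\mH}\Delta$. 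Here the path-averaged Hessian is
\begin{equation*}
\bar{\mH} = \int_0^1 \bigl(\Kinv + \Hlik(\hat{\vf}_0 + s\Delta)\bigr)\,ds .
\end{equation*}
By \Cref{prop:graph_laplacian} each $\Hlik(\cdot)$ is a positive semidefinite weighted Laplacian, so $\bar{\mH} \succeq \Kinv \succ 0$. Therefore $\bar{\mH}$ defines a genuine norm $\normH{v} = (v^\top \bar{\mH} v)^{1/2}$ and has a symmetric inverse square root.

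The optimality conditions then yield $\bar{\mH}\Delta = -\Rmat\hat{\vf}_\eta = -\Rmat(\hat{\vf}_0 + \Delta)$. Rearranging gives $(\bar{\mH} + \Rmat)\Delta = -\Rmat\hat{\vf}_0$. Change variables to $u = \bar{\mH}^{1/2}\Delta$ and $g = \bar{\mH}^{1/2}\hat{\vf}_0$, and set $\mB = \bar{\mH}^{-1/2}\Rmat\bar{\mH}^{-1/2} \succeq 0$. The equation becomes $(\mI + \mB)u = -\mB g$, so $u = -(\mI+\mB)^{-1}\mB\, g$. The matrix $(\mI+\mB)^{-1}\mB$ is symmetric with eigenvalues $\mu/(1+\mu)$ over the eigenvalues $\mu \ge 0$ of $\mB$. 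The map $\mu\mapsto \mu/(1+\mu)$ is increasing, so its spectral norm is $\mu_{\max}/(1+\mu_{\max})$. It follows that
\begin{equation*}
\normH{\Delta} = \|u\|_2 \le \frac{\mu_{\max}}{1+\mu_{\max}}\|g\|_2 = \frac{\mu_{\max}}{1+\mu_{\max}}\normH{\hat{\vf}_0}.
\end{equation*}

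The main obstacle is a subtle point, not a hard computation. The matrix $\bar{\mH}$ depends on $\Delta$ itself, so the bound is implicit in the sense that both the norm and $\mu_{\max}$ are defined relative to the path. This is consistent with the statement, which defers the definition of $\bar{\mH}$ to the proof. What must be checked carefully is that the integral representation is valid. This requires $J_0$ to be twice continuously differentiable, which holds because $\log\Phi$ is smooth with bounded second derivative. Positivity of the weights $\omega_m$ from \Cref{prop:graph_laplacian} then guarantees $\bar{\mH}\succ 0$, so all the inverses above exist. No property of $\Rmat$ beyond positive semidefiniteness is used, and \Cref{prop:filter_properties}(b) already provides this.
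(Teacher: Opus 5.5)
Your proposal is correct and follows the paper's proof essentially step for step. You use the same path-averaged Hessian from the fundamental theorem of calculus, the same linear system $(\bar{\mH}+\Rmat)\Delta = -\Rmat\hat{\vf}_0$, and the same whitening by $\bar{\mH}^{1/2}$ leading to the operator norm $\mu_{\max}/(1+\mu_{\max})$; your extra checks (strict convexity, bounded second derivative of $\log\Phi$, and $\bar{\mH}\succeq\Kinv\succ 0$) fill in details the paper leaves implicit.
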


Note that shifting the MAP changes the weights $\mW$ but not the rank of $\Hlik$: since $\mW(\vf) \succ 0$ for all finite $\vf$, $\rank(\Hlik(\vf)) = N - \Ncc$ regardless of $\vf$ (see \Cref{prop:rank_invariance_statement} in \Cref{app:proofs})).

\paragraph{Why $\Rmat$ should not enter the covariance.}
Recall from \Cref{prop:pbo_vulnerable} that the observed pairwise comparisons provide no information in the null-space directions $\kerfn(\mC)$; thus, the posterior covariance in those directions should equal the prior. If we were to include $\Rmat$ in the covariance, the added $\Rmat$ would shrink the posterior covariance in the null-space directions to values smaller than the prior covariance. PairwiseGP would then underestimate its uncertainty in those directions, and EUBO would stop exploring them. By excluding $\Rmat$ from the covariance, MAP-only preserves the correct uncertainty in the null space.

\begin{theorem}[Pseudo-confidence in null directions]
\label{thm:pseudo_confidence}
Using the decomposition $\vf = \mathbf{Q}_0 \xi + \mathbf{Q}_1 \zeta$ from \Cref{prop:pbo_vulnerable}, let $\mathbf{A} = \mathbf{Q}_0^\top \Kinv \mathbf{Q}_0$ be the prior precision on the null-space coordinates $\xi$. Full augmentation $(\mH_\eta{+}\Rmat)^{-1}$ shrinks the conditional covariance of $\xi$, given $\zeta$, to $(\mathbf{A}{+}\mathbf{Q}_0^\top \Rmat \mathbf{Q}_0)^{-1} \prec \mathbf{A}^{-1}$ since $\mathbf{Q}_0^\top \Rmat \mathbf{Q}_0 \succ 0$. MAP-only preserves $\mathbf{A}^{-1}$, matching the exact posterior (\Cref{prop:pbo_vulnerable}).
\end{theorem}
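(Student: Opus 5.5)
The plan is to write both Gaussian precisions in the orthonormal basis $[\mathbf{Q}_0\ \mathbf{Q}_1]$ and read off the conditional covariance of $\xi$ given $\zeta$ as the inverse of the $(\xi,\xi)$ block of the precision. For a Gaussian with precision $\mathbf{P}$ in coordinates $(\xi,\zeta)$, the conditional precision of $\xi$ given $\zeta$ is $\mathbf{P}_{\xi\xi} = \mathbf{Q}_0^\top \mathbf{P}\mathbf{Q}_0$, with no Schur complement involved. This reduces everything to computing $\mathbf{Q}_0^\top(\cdot)\mathbf{Q}_0$ for the two candidate precisions, $\mH_\eta$ (MAP-only) and $\mH_\eta+\Rmat$ (full augmentation).

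First, since $\Hlik(\hat{\vf}_\eta)=\mC^\top\mW(\hat{\vf}_\eta)\mC$ by \Cref{prop:graph_laplacian} and $\mC\mathbf{Q}_0=\mathbf{0}$ by definition of $\kerfn(\mC)$, we get $\mathbf{Q}_0^\top\Hlik(\hat{\vf}_\eta)\mathbf{Q}_0=\mathbf{0}$. This holds for any finite $\hat{\vf}_\eta$, which is consistent with the rank invariance noted after \Cref{prop:bias_bound}. Hence $\mathbf{Q}_0^\top\mH_\eta\mathbf{Q}_0=\mathbf{A}$, so the MAP-only conditional covariance of $\xi$ is $\mathbf{A}^{-1}$. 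Separately, the prior $\mathcal{N}(\mathbf{0},\mK)$ expressed in these coordinates has conditional precision $\mathbf{Q}_0^\top\Kinv\mathbf{Q}_0=\mathbf{A}$. So $p_0(\xi\mid\zeta)$ has covariance $\mathbf{A}^{-1}$, and by \Cref{prop:pbo_vulnerable}(b) this equals the exact (Laplace) conditional posterior. That settles the ``matching'' claim.

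Second, under full augmentation the same block computation gives $\mathbf{Q}_0^\top(\mH_\eta+\Rmat)\mathbf{Q}_0=\mathbf{A}+\mathbf{B}$, where $\mathbf{B}=\mathbf{Q}_0^\top\Rmat\mathbf{Q}_0$. Positive definiteness of $\mathbf{B}$ follows because $\Rmat$ is diagonal with entries $r_i>0$ by \Cref{prop:filter_properties}(b), so $\Rmat\succ0$, and $\mathbf{Q}_0$ has orthonormal (hence linearly independent) columns: for $v\neq0$, $v^\top\mathbf{B}v=(\mathbf{Q}_0v)^\top\Rmat(\mathbf{Q}_0v)>0$. Also $\mathbf{A}\succ0$ by the same argument with $\Kinv\succ0$.

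The strict inequality $(\mathbf{A}+\mathbf{B})^{-1}\prec\mathbf{A}^{-1}$ is the only step needing care, since matrix inversion reversing strict Loewner order requires an argument. I would use the congruence $\mathbf{A}^{-1/2}$: set $\mathbf{G}=\mathbf{A}^{-1/2}\mathbf{B}\mathbf{A}^{-1/2}\succ0$. Then $(\mathbf{A}+\mathbf{B})^{-1}=\mathbf{A}^{-1/2}(\mI+\mathbf{G})^{-1}\mathbf{A}^{-1/2}$. Every eigenvalue of $(\mI+\mathbf{G})^{-1}$ is $1/(1+g)<1$ with $g>0$, so $(\mI+\mathbf{G})^{-1}\prec\mI$. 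Conjugating back by the invertible $\mathbf{A}^{-1/2}$ preserves strict order, which gives the claim and completes the proof.
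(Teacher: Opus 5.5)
Your proposal is correct and follows essentially the same route as the paper. Both write the precisions in the basis $[\mathbf{Q}_0\ \mathbf{Q}_1]$, use $\mathbf{Q}_0^\top \Hlik \mathbf{Q}_0 = \mathbf{0}$ to identify the $(\xi,\xi)$ block as $\mathbf{A}$ for MAP-only and $\mathbf{A} + \mathbf{Q}_0^\top \Rmat \mathbf{Q}_0$ for full augmentation, and then invert; your congruence argument for the strict inequality and your check that the prior's conditional precision is $\mathbf{A}$ just fill in steps the paper asserts without detail.
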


Full augmentation also reduces EUBO scores for all candidate pairs, which discourages exploration.

\begin{proposition}[Full augmentation contracts EUBO scores]
\label{prop:conservative}
Let $\Sigma_{\mathrm{aug}} := (\mH_\eta + \Rmat)^{-1}$. For any candidate pair $(x_a, x_b)$, let $s^2_\Sigma(a,b)$ denote the predictive variance of the utility difference $f(x_a) - f(x_b)$ under covariance $\Sigma$. Since $\Sigma_{\mathrm{aug}} \preceq \Sigma_{\mathrm{MO}}$, we have $s_{\mathrm{aug}}^2(a,b) \leq s_{\mathrm{MO}}^2(a,b)$ for every pair. Since $q{=}2$ EUBO is strictly increasing in the predictive standard deviation at fixed mean, the EUBO score under full augmentation is lower than under MAP-only for every candidate pair:
\begin{equation}
\label{eq:conservative}
\mathrm{EUBO}_{\mathrm{aug}}(a,b) \leq \mathrm{EUBO}_{\mathrm{MO}}(a,b) \quad \text{for every pair } (x_a, x_b).
\end{equation}
\end{proposition}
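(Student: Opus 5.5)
The argument has three steps: order the two covariances, transfer that ordering to every difference variance, and show that the $q{=}2$ EUBO score is monotone in the predictive standard deviation.

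\emph{Step 1: ordering of the covariances.} Both covariances are built from the same evaluation point $\hat{\vf}_\eta$, so $\Sigma_{\mathrm{MO}} = \mH_\eta^{-1}$ and $\Sigma_{\mathrm{aug}} = (\mH_\eta + \Rmat)^{-1}$. The matrix $\mH_\eta = \Kinv + \Hlik(\hat{\vf}_\eta)$ is positive definite, since $\Kinv \succ 0$ and $\Hlik \succeq 0$ by \Cref{prop:graph_laplacian}. The correction $\Rmat \succeq 0$ because each $r_i > 0$ by \Cref{prop:filter_properties}(b). Hence $\mH_\eta + \Rmat \succeq \mH_\eta \succ 0$. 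Matrix inversion is operator-monotone decreasing on positive definite matrices, which gives $\Sigma_{\mathrm{aug}} \preceq \Sigma_{\mathrm{MO}}$. One way to see this is to write $\mH_\eta^{1/2}(\mH_\eta+\Rmat)^{-1}\mH_\eta^{1/2} = (\mI + \mH_\eta^{-1/2}\Rmat\mH_\eta^{-1/2})^{-1} \preceq \mI$.

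\emph{Step 2: the predictive difference variance.} For a pair of training candidates, let $\mathbf{v} = \mathbf{e}_a - \mathbf{e}_b$, so that $s^2_\Sigma(a,b) = \mathbf{v}^\top \Sigma \mathbf{v}$. The Loewner ordering from Step 1 then gives $s^2_{\mathrm{aug}} \leq s^2_{\mathrm{MO}}$ directly.

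For test points, the Laplace predictive covariance has the form $\mK_{**} - \mK_{*}^\top(\mK + \Sigma_{\mathrm{lik}})^{-1}\mK_*$. Equivalently, it is $\mK_{**} - \mK_*^\top \Kinv \mK_* + \mK_*^\top \Kinv \Sigma \Kinv \mK_*$, which is affine in $\Sigma$ with a positive semidefinite congruence. The difference variance is therefore $c + \mathbf{u}^\top \Sigma \mathbf{u}$ with $\mathbf{u} = \Kinv(\mathbf{k}_a - \mathbf{k}_b)$ and $c$ independent of $\Sigma$. Monotonicity in $\Sigma$ follows again.

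The main obstacle is this test-point formula. I would verify it by conditioning: $\mathbb{E}[f_*\mid\vf] = \mK_*^\top\Kinv\vf$, and the conditional variance does not depend on $\Sigma$. Applying the law of total variance then gives the affine form above.

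\emph{Step 3: monotonicity of EUBO in $s$.} Fix the mean difference $\delta$. The mean is unchanged because both methods use the same $\hat{\vf}_\eta$; I would note this explicitly. Define $g(s) = \delta\Phi(\delta/s) + s\phi(\delta/s)$. Differentiating, and using $\phi'(z) = -z\phi(z)$,
\[
g'(s) = -\tfrac{\delta^2}{s^2}\phi(\delta/s) + \phi(\delta/s) + s\cdot\tfrac{\delta}{s}\cdot\tfrac{\delta}{s^2}\phi(\delta/s) = \phi(\delta/s) > 0 .
\]
So $g$ is strictly increasing for $s > 0$. Combined with $s_{\mathrm{aug}} \leq s_{\mathrm{MO}}$ from Step 2, this yields \eqref{eq:conservative}.

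For the degenerate case $s = 0$, I would extend $g$ by continuity as $g(0) = \max(\delta, 0)$. The inequality is preserved, since $g(s) \geq \max(\delta,0)$ for all $s \geq 0$.
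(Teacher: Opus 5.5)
Your proof is correct and follows the paper's argument essentially step for step: you order the inverted precisions in the Loewner sense, transfer this to the contrast variance (in-sample via $e_a - e_b$, out-of-sample via the law of total variance, giving $\nu_{ab} + r_{ab}^\top \Kinv \Sigma \Kinv r_{ab}$), and use $\partial\,\mathrm{EUBO}/\partial s = \phi(\delta/s) > 0$ at the shared mean. One small caution: the form $\mK_{**} - \mK_*^\top(\mK + \Sigma_{\mathrm{lik}})^{-1}\mK_*$ presupposes an invertible likelihood Hessian, which fails for MAP-only because $\Hlik$ is rank-deficient, so rely on the conditioning/total-variance derivation you give, as the paper does.
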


\paragraph{The cost of MAP-only is bounded.}
Excluding $\Rmat$ from the covariance avoids both issues. The acquisition gap from between the MAP-only and full-augmentation strategies is bounded.

\begin{theorem}[Bounded acquisition gap]
\label{thm:one_step_regret}
The difference in EUBO between the MAP-only maximizer and the full-augmentation maximizer is at most $O(\lambda_1)$, where $\lambda_1$ is the largest eigenvalue of $\mH_\eta^{-1/2}\Rmat\,\mH_\eta^{-1/2}$, measuring how large the correction $\Rmat$ is relative to the Hessian $\mH_\eta$. The precise bound is given in \Cref{app:proofs} and is finite for any continuous kernel on a compact domain.
\end{theorem}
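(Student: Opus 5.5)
The plan is a uniform (over pairs) perturbation bound on the EUBO score, followed by the standard argmax-comparison argument. Both strategies share the same posterior mean $\hat{\vf}_\eta$ and differ only in covariance: $\Sigma_{\mathrm{MO}} = \mH_\eta^{-1}$ versus $\Sigma_{\mathrm{aug}} = (\mH_\eta+\Rmat)^{-1}$. For the predictive covariance at arbitrary test points, I will use the usual Laplace form $k(x,x') - k_x^\top(\mK + \mH_\eta^{-1}\text{-type correction})^{-1}k_{x'}$. Equivalently, the predictive variance of $f(x_a)-f(x_b)$ is $v_0 - g^\top(\mK^{-1} - \mK^{-1}\Sigma\mK^{-1})g$ for a fixed vector $g$ built from $k(\cdot,x_a)-k(\cdot,x_b)$. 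In this form it depends on $\Sigma$ only through the term $g^\top\mK^{-1}\Sigma\mK^{-1}g$.

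\textbf{Step 1: covariance gap.} Write
\[
\Sigma_{\mathrm{aug}} = \mH_\eta^{-1/2}(\mI + \mathbf{B})^{-1}\mH_\eta^{-1/2}, \qquad \mathbf{B} = \mH_\eta^{-1/2}\Rmat\mH_\eta^{-1/2} \succeq 0 .
\]
Then
\[
\Sigma_{\mathrm{MO}} - \Sigma_{\mathrm{aug}} = \mH_\eta^{-1/2}\,\mathbf{B}(\mI+\mathbf{B})^{-1}\,\mH_\eta^{-1/2} \preceq \frac{\lambda_1}{1+\lambda_1}\,\Sigma_{\mathrm{MO}} .
\]
Hence
\[
0 \le s_{\mathrm{MO}}^2(a,b) - s_{\mathrm{aug}}^2(a,b) \le \frac{\lambda_1}{1+\lambda_1}\, u(a,b),
\]
where $u(a,b) = h^\top\Sigma_{\mathrm{MO}}h$ with $h=\mK^{-1}g$. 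This quantity is bounded above by $\|h\|^2/\lambda_{\min}(\mH_\eta) \le \|h\|^2\lambda_{\max}(\mK)$. Since $\mH_\eta \succeq \mK^{-1}$, the bound is uniform in the pair whenever $k$ is continuous on a compact $\mathcal{X}$ (so $\|g\|$ is bounded and $\mK$ is fixed and finite).

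\textbf{Step 2: from variance to EUBO.} From \eqref{eq:eubo}, $\partial\,\mathrm{EUBO}/\partial s = \phi(\delta/s) \le 1/\sqrt{2\pi}$, so EUBO is $1/\sqrt{2\pi}$-Lipschitz in $s$ at fixed $\delta$. Using $s_{\mathrm{MO}} - s_{\mathrm{aug}} \le (s_{\mathrm{MO}}^2 - s_{\mathrm{aug}}^2)/s_{\mathrm{MO}}$ would need a lower bound on $s_{\mathrm{MO}}$. To avoid that, I will use instead
\[
s_{\mathrm{MO}} - s_{\mathrm{aug}} \le s_{\mathrm{MO}}\Bigl(1-\sqrt{1-\tfrac{\lambda_1}{1+\lambda_1}}\Bigr) \le s_{\mathrm{MO}}\,\frac{\lambda_1}{1+\lambda_1} .
\]
This is valid because $s_{\mathrm{aug}}^2 \ge s_{\mathrm{MO}}^2/(1+\lambda_1)$, which follows from $\Sigma_{\mathrm{aug}} \succeq \Sigma_{\mathrm{MO}}/(1+\lambda_1)$ and the fact that the predictive variance is monotone affine in $\Sigma$ with a nonnegative prior part. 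Combined with $s_{\mathrm{MO}} \le S := \sup_{a,b}\sqrt{2\cdot 2\sup_x k(x,x)}$, this gives
\[
\sup_{a,b}\,\bigl|\mathrm{EUBO}_{\mathrm{MO}} - \mathrm{EUBO}_{\mathrm{aug}}\bigr| \le \varepsilon := \frac{S}{\sqrt{2\pi}}\cdot\frac{\lambda_1}{1+\lambda_1} = O(\lambda_1).
\]

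\textbf{Step 3: argmax comparison.} Let $p_{\mathrm{MO}}$ and $p_{\mathrm{aug}}$ be the respective maximizers; they exist by compactness and continuity. By optimality of each maximizer and the uniform gap,
\[
\mathrm{EUBO}_{\mathrm{MO}}(p_{\mathrm{MO}}) - \mathrm{EUBO}_{\mathrm{MO}}(p_{\mathrm{aug}}) \le \mathrm{EUBO}_{\mathrm{MO}}(p_{\mathrm{MO}}) - \mathrm{EUBO}_{\mathrm{aug}}(p_{\mathrm{MO}}) + \varepsilon \le 2\varepsilon,
\]
and the same holds with the roles of the two scores swapped. This yields the $O(\lambda_1)$ bound with an explicit constant.

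\textbf{Main obstacle.} The delicate step is the ratio bound $s_{\mathrm{aug}}^2 \ge s_{\mathrm{MO}}^2/(1+\lambda_1)$ at test points. The predictive variance is prior variance minus a data term, and the data term is affine in $\Sigma$ with a negative coefficient on $\mK^{-1}$. So the sandwich $\Sigma_{\mathrm{MO}}/(1+\lambda_1) \preceq \Sigma_{\mathrm{aug}} \preceq \Sigma_{\mathrm{MO}}$ does not transfer multiplicatively to the full predictive variance. If this fails, I will fall back to the additive bound from Step 1. I will then bound $\sqrt{\cdot}$ differences by $\sqrt{|s_{\mathrm{MO}}^2-s_{\mathrm{aug}}^2|}$ where $s$ is small, and by the division bound where $s$ is bounded below. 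This gives $O(\lambda_1)$ away from degenerate pairs and only $O(\sqrt{\lambda_1})$ in general, so I would state the precise bound with the $\frac{\lambda_1}{1+\lambda_1}$ factor and kernel-dependent constants.
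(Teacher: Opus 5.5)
Your proposal is correct and follows essentially the same route as the paper's proof. Both arguments use four ingredients:
\begin{itemize}
\item the $1/\sqrt{2\pi}$-Lipschitz bound of EUBO in $s$;
\item the ratio bound $s_{\mathrm{aug}}^2 \ge s_{\mathrm{MO}}^2/(1+\lambda_1)$;
\item the uniform bound $s_{\mathrm{MO}}^2 \le 4\sup_x k(x,x)$;
\item the factor-$2$ argmax comparison.
\end{itemize}
Your Step~1 additive gap is not needed, and the difference between your constant $\lambda_1/(1+\lambda_1)$ and the paper's $\lambda_1/2$ is immaterial.

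The worry in your ``main obstacle'' paragraph is unfounded. Regroup the predictive variance as $\nu_{ab} + h^\top \Sigma h$, where $\nu_{ab} = v_0 - g^\top \Kinv g \ge 0$ is the Schur complement. This is exactly the ``nonnegative prior part'' you invoke in Step~2, and the $\nu_{ab}$ used in the paper. In this form the sandwich on $\Sigma$ transfers multiplicatively to the full predictive variance, so the $O(\sqrt{\lambda_1})$ fallback is never needed.
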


\subsection{Adaptive Activation}
\label{sec:adaptive}

Applying KappaSharp at every iteration, referred to as \emph{Static KS} in our experiments, is beneficial when the rank deficiency of the likelihood Hessian is the primary bottleneck limiting the surrogate. However, it can be detrimental when the prior already provides sufficient conditioning. We observe that PairwiseGP's confidence on recently queried pairs separates these two regimes: when the rank deficiency of the likelihood Hessian dominates, the surrogate quickly becomes confident about individual comparisons; when the prior is already well-conditioned, the surrogate remains uncertain. The adaptive variant of KappaSharp uses this separation to decide when to activate the correction.

\paragraph{Decisiveness score.}
After fitting the surrogate on $\mathcal{D}_{t-1}$, we measure how confidently it distinguishes a recently compared pair.
Let $\mu_{\mathrm{diff}} = |\hat{f}(x^+_{t-1}) - \hat{f}(x^-_{t-1})|$ be the absolute posterior mean difference and $\sigma_{\mathrm{diff}}^2$ the posterior variance of $f(x^+_{t-1}) - f(x^-_{t-1})$.
The surrogate's predicted probability that the comparison outcome matches its ranking is $p_t = \Phi\!\left(\mu_{\mathrm{diff}} / \sqrt{\sigma_{\mathrm{diff}}^2 + 2\probitscale^2}\right) \in [1/2, 1)$, where $2\probitscale^2$ is the comparison noise variance from the probit model (\Cref{eq:noisy_eval}).
A high value of $p_t$ indicates that the surrogate predicts a clear utility difference between the two candidates, whereas $p_t \approx 0.5$ indicates that the comparison is ambiguous. We define the \emph{decisiveness score} as
\begin{equation}
\label{eq:sexp}
    s_{\mathrm{exp},t} = 1 - \frac{\mathcal{H}(p_t)}{\log 2},
\end{equation}
where $\mathcal{H}(p) = -p\log p - (1{-}p)\log(1{-}p)$ is the binary entropy. The score ranges from $0$ (coin flip) to $1$ (decisive).

\paragraph{Activation rule.}
We track an exponential moving average to smooth the step-to-step variance of $s_{\mathrm{exp},t}$:
\begin{equation}
\label{eq:ema}
    \bar{s}_t = (1 - \alpha_{\mathrm{ema}})\,\bar{s}_{t-1} + \alpha_{\mathrm{ema}} s_{\mathrm{exp},t}, \quad \alpha_{\mathrm{ema}} = 0.2,
\end{equation}
initialized at $\bar{s}_0 = 0$.
The correction activates at step $t$ when $\bar{s}_t \geq \tau$, with threshold $\tau = 0.30$. In each step, $\tau = 0.30$ corresponds to $p_t \geq 0.81$ (\Cref{prop:margin_safe} in \Cref{app:threshold_interp}). The activation rule uses the EMA $\bar{s}_t$, not the per-step score. Thus, $\bar{s}_t \geq \tau$ indicates sufficient average decisiveness over recent comparisons. \Cref{alg:adaptive_ks} summarizes the Adaptive KappaSharp approach. When the correction is active, the additional computation consists of reading the diagonal of $\Kinv$ to form $\Rmat$, searching over 60 log-spaced values of $\eta$ to solve the $\kappafn$-calibration problem, and running one extra Newton solve for the augmented MAP. The better-conditioned Hessian often reduces the number of Newton iterations needed, partially or fully offsetting this cost (\Cref{app:wallclock}). When the correction is inactive, the only overhead per step is evaluating $\sexp$ from a single posterior prediction.

\begin{algorithm}[t]
\caption{Adaptive KappaSharp for Preferential BO}
\label{alg:adaptive_ks}
\begin{algorithmic}[1]
\REQUIRE Initial data $\mathcal{D}_0$, GP prior $\mK$, threshold $\tau = 0.30$, calibration $\alpha = 0.1$, EMA rate $\alpha_{\mathrm{ema}} = 0.2$, minimum step $t_{\min} = 8$
\STATE Initialize $\bar{s}_0 \gets 0$
\FOR{$t = 1, 2, \ldots, T$}
    \STATE Fit PairwiseGP on $\mathcal{D}_{t-1}$: compute MAP $\hat{\vf}_0$, Hessian $\mH = \Kinv + \Hlik$
    \STATE Compute $s_{\mathrm{exp},t}$ from PairwiseGP's prediction on most recent comparison \hfill \COMMENT{Eq.~\eqref{eq:sexp}}
    \STATE Update $\bar{s}_t \gets (1 - \alpha_{\mathrm{ema}})\,\bar{s}_{t-1} + \alpha_{\mathrm{ema}} s_{\mathrm{exp},t}$ \hfill \COMMENT{Eq.~\eqref{eq:ema}}
    \IF{$t \geq t_{\min}$ \AND $\bar{s}_t \geq \tau$}
        \STATE Compute $a_i = (\Kinv)_{ii}$ and $\Rmat(\eta) = \diag(\eta^2/(\eta + a_i))$ \hfill \COMMENT{Eq.~\eqref{eq:filter}}
        \STATE Solve $\etastar = \underset{\eta}{\min}\;\{\eta \geq 0 : \kappafn(\mH + \Rmat(\eta)) \leq \kappafn(\mH)^{1-\alpha}\}$ \hfill \COMMENT{Eq.~\eqref{eq:calibration}}
        \STATE Re-solve MAP: $\hat{\vf}_\eta \gets \argmin J_0(\vf) + \tfrac{1}{2}\vf^\top \Rmat(\etastar)\vf$ \hfill \COMMENT{Eq.~\eqref{eq:augmented_map}}
        \STATE Set approximate posterior: $\mathcal{N}(\hat{\vf}_\eta,\; \mH^{-1}|_{\hat{\vf}_\eta})$ \hfill \COMMENT{Unaugmented covariance at shifted MAP}
    \ELSE
        \STATE Use standard posterior: $\mathcal{N}(\hat{\vf}_0,\; \mH^{-1}|_{\hat{\vf}_0})$
    \ENDIF
    \STATE Select next query $(x_a, x_b)$ via EUBO on the current posterior
    \STATE Observe comparison outcome; update $\mathcal{D}_t \gets \mathcal{D}_{t-1} \cup \{(x^+_t, x^-_t)\}$
\ENDFOR
\end{algorithmic}
\end{algorithm}
\vspace{-4mm}

\section{Experiments}
\label{sec:experiments}

We benchmark Adaptive KappaSharp against the standard PairwiseGP baseline and several alternatives on 11 problems spanning 5 to 20 dimensions. The baselines include Static KS (KappaSharp applied at every iteration without the activation rule), Fixed20 (correction activated only after step 20, without the decisiveness criterion), Connected Querying (one candidate per query is fixed to the previous query point $x'_t = x_{t-1}$, following the querying strategy of \citet{xu2024principled}; EUBO is retained as the acquisition function for fair comparison), and the unmodified PairwiseGP-Laplace + EUBO baseline (B). We report ablations and diagnostic comparisons with non-Laplace inference.

\subsection{Setup}
\label{sec:setup}

\paragraph{Benchmarks.}
The 11 benchmarks span 5 to 20 dimensions: DTLZ2-8D, DTLZ2-16D, DTLZ2-20D (multi-objective PBO test functions), Plasma-16D (model-based control design in plasma medicine~\citep{shao2025coactive}), Ackley-8D, Hartmann-6D, Levy-10D, Levy-20D, Vehicle-5D (single-objective), CarCab-7D (vehicle cabin design), and RobotPush-14D (Box2D physics simulation).

\paragraph{Protocol.}
All methods use PairwiseGP with the Laplace approximation and EUBO acquisition~\citep{lin2022preferenceexploration}, implemented in BoTorch~\citep{balandat2020botorch}. Each run consists of $T = 50$ sequential pairwise queries, with comparison noise $\probitscale = 0.1$. We test five initialization conditions: \texttt{pool\_n6\_k15} (6 candidates, all 15 pairwise comparisons; primary), \texttt{pool\_n8\_k15} (8 candidates, 15 comparisons), and three pure-matching conditions \texttt{pure\_k3}, \texttt{pure\_k5}, \texttt{pure\_k15} ($k$ non-overlapping pairs among $2k$ candidates). Each condition uses 60 seeds with paired same-node execution and common random numbers (CRN). The total number of runs in the main grid is $16{,}500$ (11 benchmarks $\times$ 5 inits $\times$ 5 methods $\times$ 60 seeds).

\paragraph{Evaluation.}
We report percentage improvement in final best utility over baseline B. Significance is assessed via paired $t$-test ($p < 0.05$, uncorrected). A benchmark-method pair is classified as win (W), loss (L), or neutral (N) based on the sign and significance of the improvement.

\subsection{Baselines and ablations}
\label{sec:diagnosis}

\begin{table*}[!h]
\centering
\caption{Benchmark results (pool\_n6\_k15 init, 60 seeds). B: PairwiseGP-Laplace + EUBO baseline. Static KS: KappaSharp without activation rule. Fixed20: correction on at step $t{\geq}20$. Conn.\ Query: connected querying strategy of~\citet{xu2024principled}, with EUBO retained for fair comparison. Adaptive KS: proposed method with confidence-based activation ($\tau{=}0.30$). Gain (\%) relative to B. \textbf{Bold}: best mean. $^{*}$: $p{<}0.05$, $^{**}$: $p{<}0.01$ (paired $t$-test).}
\label{tab:main}
\resizebox{\textwidth}{!}{%
\begin{tabular}{l c | c | cc | cc | cc | cc}
\toprule
 & & \multicolumn{1}{c|}{B} & \multicolumn{2}{c|}{Static KS} & \multicolumn{2}{c|}{Fixed20} & \multicolumn{2}{c|}{Conn.\ Query} & \multicolumn{2}{c}{Adaptive KS} \\
Benchmark & $d$ & mean$\pm$std & mean$\pm$std & Gain & mean$\pm$std & Gain & mean$\pm$std & Gain & mean$\pm$std & Gain \\
\midrule
  DTLZ2-16D & 16 & -1.031$\pm$0.341 & \textbf{-0.778$\pm$0.281} & \textcolor{ForestGreen}{+24.5\%$^{**}$} & -0.783$\pm$0.282 & \textcolor{ForestGreen}{+24.0\%$^{**}$} & -1.016$\pm$0.327 & +1.5\% & -0.875$\pm$0.265 & \textcolor{ForestGreen}{+15.1\%$^{**}$} \\
  DTLZ2-8D & 8 & -0.184$\pm$0.134 & \textbf{-0.142$\pm$0.068} & \textcolor{ForestGreen}{+23.2\%$^{*}$} & -0.167$\pm$0.114 & +9.6\% & -0.357$\pm$0.189 & \textcolor{red}{-93.9\%$^{**}$} & -0.163$\pm$0.101 & +11.7\% \\
  DTLZ2-20D & 20 & -1.439$\pm$0.390 & \textbf{-1.183$\pm$0.379} & \textcolor{ForestGreen}{+17.8\%$^{**}$} & -1.256$\pm$0.453 & \textcolor{ForestGreen}{+12.7\%$^{**}$} & -1.350$\pm$0.342 & +6.2\% & -1.268$\pm$0.444 & \textcolor{ForestGreen}{+11.9\%$^{*}$} \\
  Plasma-16D & 16 & -92.785$\pm$24.954 & -84.005$\pm$20.141 & \textcolor{ForestGreen}{+9.5\%$^{**}$} & -85.676$\pm$22.113 & \textcolor{ForestGreen}{+7.7\%$^{*}$} & -103.451$\pm$38.509 & \textcolor{red}{-11.5\%$^{*}$} & \textbf{-82.642$\pm$21.800} & \textcolor{ForestGreen}{+10.9\%$^{**}$} \\
  Levy-10D & 10 & -17.193$\pm$11.645 & -13.985$\pm$9.807 & \textcolor{ForestGreen}{+18.7\%$^{*}$} & -13.675$\pm$8.420 & \textcolor{ForestGreen}{+20.5\%$^{**}$} & -20.920$\pm$11.493 & \textcolor{red}{-21.7\%$^{*}$} & \textbf{-13.069$\pm$9.307} & \textcolor{ForestGreen}{+24.0\%$^{**}$} \\
  Levy-20D & 20 & -64.416$\pm$22.656 & -62.420$\pm$29.474 & +3.1\% & -58.497$\pm$23.673 & \textcolor{ForestGreen}{+9.2\%$^{*}$} & -70.013$\pm$22.260 & -8.7\% & \textbf{-58.167$\pm$27.639} & \textcolor{ForestGreen}{+9.7\%$^{*}$} \\
  Ackley-8D & 8 & \textbf{-1.168$\pm$0.756} & -1.350$\pm$0.773 & \textcolor{red}{-15.6\%$^{*}$} & -1.265$\pm$0.786 & -8.3\% & -1.891$\pm$0.776 & \textcolor{red}{-61.9\%$^{**}$} & -1.184$\pm$0.739 & -1.4\% \\
  Hartmann-6D & 6 & \textbf{2.959$\pm$0.341} & 2.818$\pm$0.444 & \textcolor{red}{-4.8\%$^{**}$} & 2.810$\pm$0.513 & \textcolor{red}{-5.0\%$^{**}$} & 2.611$\pm$0.586 & \textcolor{red}{-11.8\%$^{**}$} & 2.934$\pm$0.456 & -0.9\% \\
  CarCab-7D & 7 & -3.888$\pm$0.081 & -3.900$\pm$0.084 & -0.3\% & -3.894$\pm$0.091 & -0.2\% & -3.895$\pm$0.083 & -0.2\% & \textbf{-3.889$\pm$0.088} & -0.0\% \\
  Vehicle-5D & 5 & -5038.884$\pm$0.261 & -5038.994$\pm$0.945 & -0.0\% & -5039.100$\pm$0.993 & -0.0\% & -5039.640$\pm$1.875 & \textcolor{red}{-0.0\%$^{*}$} & \textbf{-5038.873$\pm$0.136} & +0.0\% \\
  RobotPush-14D & 14 & 3.576$\pm$1.799 & 3.639$\pm$1.851 & +1.8\% & 3.736$\pm$1.665 & +4.5\% & 2.691$\pm$1.820 & \textcolor{red}{-24.8\%$^{**}$} & \textbf{3.776$\pm$1.810} & +5.6\% \\
\midrule
  \textit{W / L / N} &  &  & \multicolumn{2}{c}{5 / 2 / 4} & \multicolumn{2}{c}{5 / 1 / 5} & \multicolumn{2}{c}{0 / 7 / 4} & \multicolumn{2}{c}{5 / 0 / 6} \\
\bottomrule
\end{tabular}}
\end{table*}

\Cref{tab:main} compares all methods on the primary initialization (pool\_n6\_k15, 60 seeds). Static KS wins on 5 benchmarks (DTLZ2-16D $+24.5\%$, DTLZ2-8D $+23.2\%$, DTLZ2-20D $+17.8\%$, Plasma $+9.5\%$, Levy-10D $+18.7\%$) but loses on Ackley-8D ($-15.6\%$) and Hartmann-6D ($-4.8\%$). Connected Querying achieves 0W/7L/4N, with $-93.9\%$ on DTLZ2-8D and $-61.9\%$ on Ackley-8D (both $p < 0.01$); see \Cref{app:cpbo} for details.

We ablate the correction design in \Cref{tab:ablation} (\Cref{app:ablation}). Uniform jitter ($\Rmat = cI$): 0W/5L. Uniform diagonal ($\Rmat = \eta^* I$, without $(\Kinv)_{ii}$ weighting): 3W/2L. MetricOnly ($\Rmat$ in the Newton step but $J_0$ unchanged, so MAP is the same): 1W/0L. CovOnly (MAP unchanged, covariance uses $(\mH + \Rmat)^{-1}$): 1W/0L. These results indicate that the MAP correction is the main source of improvement. We also compare against EP-EI and Gibbs-EI~\citep{takeno2023practical}. As discussed in \Cref{sec:related}, changing the inference method does not resolve the rank deficiency of the likelihood Hessian (\Cref{prop:pbo_vulnerable}). Both score worse than Laplace+EUBO (\Cref{app:ep_gibbs}); EP and Gibbs do not have an acquisition function like EUBO; hence, they use EI, designed for scalar observations rather than pairwise comparisons.

\subsection{Main results}
\label{sec:main_results}

Adaptive KS scores \textbf{5W/0L/6N} on the primary initialization: DTLZ2-16D ($+15.1\%$), DTLZ2-20D ($+11.9\%$), Plasma-16D ($+10.9\%$), Levy-10D ($+24.0\%$), Levy-20D ($+9.7\%$). On Ackley-8D and Hartmann-6D, where Static KS loses, Adaptive KS is neutral ($-1.4\%$ and $-0.9\%$, respectively). Fixed20 still loses on Hartmann ($-5.0\%$), confirming that the decisiveness-based activation is important. \Cref{fig:forest} shows effect sizes; convergence curves are given in \Cref{app:convergence}.

\begin{figure}[t]
    \centering
    \includegraphics[width=0.88\textwidth, trim=0 8 0 0, clip]{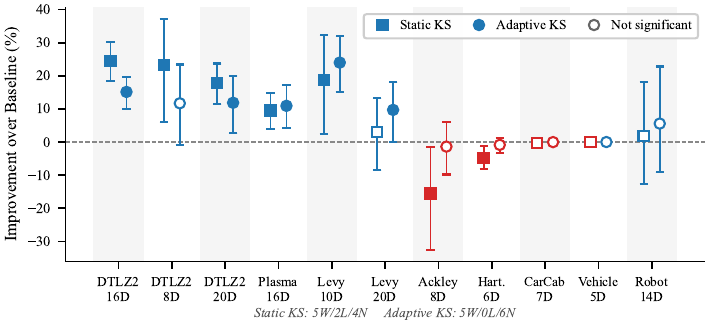}
    \vspace{-4mm}
    \caption{Effect sizes (\% improvement over baseline, 95\% CI, pool\_n6\_k15, 60 seeds). Filled markers indicate $p < 0.05$. Adaptive KS: 5W/0L/6N. Static KS: 5W/2L/4N.}
    \label{fig:forest}
\end{figure}

\paragraph{Robustness across initializations.}
Across all 55 cells (11 benchmarks $\times$ 5 initializations): 18W/2L/35N. Both losses occur on Hartmann-6D under sparse pure-matching initializations and are smaller than $2.5\%$. On pool initializations: 10W/0L. On the plasma problem~\citep{shao2025coactive}: $+10.9\%$ ($p = 0.003$). The threshold $\tau$ is stable: $\tau \in \{0.25, 0.30, 0.35\}$ produces zero losses on 4 representative benchmarks (\Cref{app:tau_sensitivity}).

\section{Discussion and conclusion}
\label{sec:conclusion}

Standard EUBO querying in PBO yields disconnected comparison graphs, causing rank deficiency in the likelihood Hessian and ill-conditioning irrespective of the inference method. KappaSharp addresses this by adding a prior-weighted diagonal correction during MAP estimation, with an adaptive variant that activates based on decisiveness. Across 11 benchmarks, including a plasma medicine task, Adaptive KappaSharp consistently improves performance, with gains up to 
+10.9\%. Remaining limitations include an empirically chosen threshold and evaluation with oracle-simulated preferences. Future work will develop eigenvector-aligned (non-diagonal) corrections, extend analysis to larger batch sizes, and automate threshold selection.

\bibliography{references}
\bibliographystyle{plainnat}

\newpage
\appendix
\section{Proofs}
\label{app:proofs}

\subsection{Proof of \texorpdfstring{\Cref{prop:graph_laplacian}}{Proposition} ($\Hlik$ is a Weighted Graph Laplacian)}

\begin{proof}
For comparison $m$, the probit model (\Cref{eq:probit}) gives $P(x^+_m \succ x^-_m) = \Phi(s_m)$ with $s_m = \mC_m^\top \vf / (\probitscale\sqrt{2})$, so the log-likelihood contribution is $\mathcal{L}_m(\vf) = \log \Phi(s_m)$. The gradient is
\begin{equation}
    \nabla_{\vf} \mathcal{L}_m = \frac{1}{\probitscale\sqrt{2}} \psi(s_m) \mC_m, \quad \text{where } \psi(z) = \frac{\phi(z)}{\Phi(z)}
\end{equation}
is the inverse Mills ratio. Differentiating again:
\begin{equation}
    \nabla^2_{\vf} \mathcal{L}_m = -\frac{1}{2\probitscale^2} \psi(s_m)\bigl(\psi(s_m) + s_m\bigr) \mC_m \mC_m^\top = -\omega_m \mC_m \mC_m^\top.
\end{equation}
\emph{Positivity of $\omega_m$.} For all finite $s_m$, we need $\psi(s_m)(\psi(s_m) + s_m) > 0$. Since $\psi(z) = \phi(z)/\Phi(z) > 0$ for all finite $z$, it suffices to show $\psi(z) + z > 0$. This follows from the classical Mills ratio inequality: $\psi(z) > -z$ for all $z$ (equivalently, $\phi(z)/\Phi(z) > -z$, which holds because $\Phi(z) > 0$ and $\phi(z) > 0$, and for $z \geq 0$ it is immediate; for $z < 0$, the bound $\Phi(z) < \phi(z)/|z|$ gives $\psi(z) > |z| = -z$).

Summing over all $M$ comparisons yields $\Hlik = -\nabla^2 \log p(\mathcal{D} \mid \vf) = \sum_{m=1}^M \omega_m \mC_m \mC_m^\top = \mC^\top \mW \mC$.

\emph{Rank.} Since $\mW$ is positive diagonal (all $\omega_m > 0$), we have $\ker(\mC^\top \mW \mC) = \ker(\mW^{1/2} \mC) = \ker(\mC)$. The kernel of the incidence matrix $\mC$ of a (multi)graph on $N$ vertices with $\Ncc$ connected components is spanned by the $\Ncc$ component-indicator vectors, so $\rank(\mC^\top \mW \mC) = N - \Ncc$.
\end{proof}

\subsection{Proof of \texorpdfstring{\Cref{prop:pbo_vulnerable}}{Likelihood-Null Directions}}

\begin{proof}
Let $\mathbf{Q} = [\mathbf{Q}_0\;\mathbf{Q}_1]$ be orthogonal with $\mathbf{Q}_0$ spanning $\kerfn(\mC)$ and $\mathbf{Q}_1$ spanning $\mathrm{range}(\mC^\top)$. Write $\vf = \mathbf{Q}_0 \xi + \mathbf{Q}_1 \zeta$.

\emph{(a)} Since $\mC\mathbf{Q}_0 = \mathbf{0}$, every comparison score $\mC_m^\top \vf = \mC_m^\top \mathbf{Q}_1 \zeta$ depends only on $\zeta$. Hence $p(\mathcal{D} \mid \xi, \zeta) = p(\mathcal{D} \mid \zeta)$.

\emph{(b)} By Bayes' rule, $p(\xi, \zeta \mid \mathcal{D}) \propto p(\mathcal{D} \mid \zeta)\, p_0(\xi, \zeta) = p_0(\xi \mid \zeta)\, p(\mathcal{D} \mid \zeta)\, p_0(\zeta)$. Normalizing gives $p(\xi, \zeta \mid \mathcal{D}) = p_0(\xi \mid \zeta)\, p(\zeta \mid \mathcal{D})$, so $p(\xi \mid \zeta, \mathcal{D}) = p_0(\xi \mid \zeta)$.

Under the Laplace approximation, write the prior precision in $(\xi,\zeta)$-coordinates as $\mathbf{Q}^\top \Kinv \mathbf{Q} = \bigl[\begin{smallmatrix} \mathbf{A} & \mathbf{B} \\ \mathbf{B}^\top & \mathbf{D}_0 \end{smallmatrix}\bigr]$. The likelihood Hessian contributes only to the $\zeta$-block: $\mathbf{Q}^\top \mH \mathbf{Q} = \bigl[\begin{smallmatrix} \mathbf{A} & \mathbf{B} \\ \mathbf{B}^\top & \mathbf{D}_0 + \mathbf{L} \end{smallmatrix}\bigr]$ with $\mathbf{L} = \mathbf{Q}_1^\top \Hlik \mathbf{Q}_1 \succeq 0$. The MAP satisfies the first-order condition $\mathbf{A}\hat{\xi} + \mathbf{B}\hat{\zeta} = \mathbf{0}$ (since the likelihood does not depend on $\xi$). The Gaussian conditional on $\xi$ given $\zeta$ therefore has mean $\hat{\xi} - \mathbf{A}^{-1}\mathbf{B}(\zeta - \hat{\zeta}) = -\mathbf{A}^{-1}\mathbf{B}\zeta$ and covariance $\mathbf{A}^{-1}$, matching the prior conditional $p_0(\xi \mid \zeta)$.

If $\mathbf{B} = \mathbf{0}$ (block-diagonal prior), then $p_0(\xi \mid \zeta) = p_0(\xi)$, so $p_L(\xi \mid \mathcal{D}) = p_0(\xi)$ and the null-space covariance is unchanged.
\end{proof}

\subsection{Rank Invariance under MAP Shift}

\begin{proposition}[Rank invariance]
\label{prop:rank_invariance_statement}
For every finite $\vf$, $\kerfn(\Hlik(\vf)) = \kerfn(\mC)$ and $\rank(\Hlik(\vf)) = N - \Ncc$, since $\mW(\vf) \succ 0$.
\end{proposition}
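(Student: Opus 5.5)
The plan is to reduce everything to the structure already established in the proof of \Cref{prop:graph_laplacian}. Fix an arbitrary finite $\vf \in \R^N$. The comparison scores $z_m = \mC_m^\top \vf/(\probitscale\sqrt{2})$ are then finite, so \Cref{prop:graph_laplacian} gives $\Hlik(\vf) = \mC^\top \mW(\vf)\mC$ with every $\omega_m(\vf) > 0$. Hence $\mW(\vf) \succ 0$ and it has a positive diagonal square root $\mW(\vf)^{1/2}$.

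\emph{Kernel identity.} I would show the two inclusions separately.
\begin{itemize}
\item If $\mC\mathbf{v} = \mathbf{0}$, then clearly $\Hlik(\vf)\mathbf{v} = \mathbf{0}$.
\item Conversely, suppose $\Hlik(\vf)\mathbf{v} = \mathbf{0}$. Then $0 = \mathbf{v}^\top \mC^\top \mW(\vf)\mC\mathbf{v} = \|\mW(\vf)^{1/2}\mC\mathbf{v}\|_2^2 = \sum_m \omega_m(\vf)(\mC_m^\top\mathbf{v})^2$. Since each $\omega_m(\vf) > 0$, every term must vanish, so $\mC\mathbf{v} = \mathbf{0}$.
\end{itemize}
This gives $\kerfn(\Hlik(\vf)) = \kerfn(\mC)$, and the set on the right does not depend on $\vf$.

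\emph{Rank.} By rank--nullity, $\rank(\Hlik(\vf)) = N - \dim\kerfn(\mC)$. It remains to identify $\dim\kerfn(\mC) = \Ncc$. A vector $\mathbf{v}$ lies in $\kerfn(\mC)$ if and only if $v_i = v_j$ for every edge $(i,j)$ of the comparison graph. By following paths, $\mathbf{v}$ is then constant on each connected component. Conversely, any vector that is constant on components is annihilated by $\mC$. So the $\Ncc$ component-indicator vectors span the kernel. They have disjoint supports, so they are linearly independent, and $\dim\kerfn(\mC) = \Ncc$. This holds for multigraphs (repeated comparisons) as well. It also holds for isolated vertices, since each isolated vertex is its own component with an indicator vector.

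The only step needing care is the positivity $\omega_m > 0$ at every finite $\vf$, which relies on the Mills-ratio inequality $\psi(z) + z > 0$. That inequality was already handled in the proof of \Cref{prop:graph_laplacian}, so I would simply invoke it. Finally, I would remark that the conclusion applies in particular at $\hat{\vf}_0$ and at $\hat{\vf}_\eta$, provided both are finite. Finiteness is guaranteed because $J_0 + \tfrac12\vf^\top\Rmat\vf$ is strongly convex, owing to $\Kinv \succ 0$.
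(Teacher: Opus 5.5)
Your proposal is correct and follows essentially the same route as the paper: the identity $\mathbf{v}^\top \Hlik(\vf)\mathbf{v} = \|\mW(\vf)^{1/2}\mC\mathbf{v}\|^2$ with $\mW(\vf)\succ 0$ gives $\kerfn(\Hlik(\vf)) = \kerfn(\mC)$, and rank--nullity finishes the argument. The only differences are additions: you prove $\dim\kerfn(\mC) = \Ncc$ explicitly via component indicators, whereas the paper imports this incidence-matrix fact from the proof of the graph-Laplacian proposition; and you add a correct but unneeded remark that strong convexity guarantees the MAP points where the result is applied exist.
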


\begin{proof}
By \Cref{prop:graph_laplacian}, $\Hlik(\vf) = \mC^\top \mW(\vf)\mC$ with $\mW(\vf) \succ 0$. Then $v^\top \Hlik(\vf) v = \|\mW(\vf)^{1/2}\mC v\|^2$, so $v \in \kerfn(\Hlik(\vf))$ iff $\mC v = 0$. Hence $\kerfn(\Hlik(\vf)) = \kerfn(\mC)$ for all finite $\vf$, and $\rank(\Hlik(\vf)) = N - \Ncc$ by rank-nullity.
\end{proof}

\subsection{Proof of \texorpdfstring{\Cref{prop:filter_properties}}{Proposition} (Correction Properties)}

\begin{proof}
Let $a_i = (\mK^{-1})_{ii} > 0$ and $r_i(\eta) = \eta^2/(\eta + a_i)$ for $\eta > 0$.

\emph{(i) Positivity.} $\eta > 0$ and $\eta + a_i > 0$ immediately give $r_i > 0$.

\emph{(ii) Monotonicity in $\eta$.} $\frac{\partial r_i}{\partial \eta} = \frac{\eta(\eta + 2a_i)}{(\eta + a_i)^2}$. The numerator is positive (both factors positive for $\eta > 0$, $a_i > 0$), and the denominator is positive.

\emph{(iii) Decreasing in $a_i$.} $\frac{\partial r_i}{\partial a_i} = -\frac{\eta^2}{(\eta + a_i)^2} < 0$.

\emph{(iv) Bounded.} $r_i = \eta - \frac{\eta a_i}{\eta + a_i} < \eta$ since $\frac{\eta a_i}{\eta + a_i} > 0$.

\emph{(v)--(vi) Limits.} As $a_i \to 0$: $r_i/\eta = \eta/(\eta + a_i) \to 1$. As $a_i \to \infty$: $r_i/\eta = \eta/(\eta + a_i) \to 0$.

\emph{Variational form.} The objective $q(r) = \frac{a_i}{2}r^2 + \frac{\eta}{2}(r - \eta)^2$ is strictly convex with $q''(r) = a_i + \eta > 0$. Setting $q'(r) = (a_i + \eta)r - \eta^2 = 0$ gives $r^* = \eta^2/(\eta + a_i) = r_i$.
\end{proof}

\subsection{Proof of \texorpdfstring{\Cref{prop:calibration}}{Proposition} ($\kappa$-Calibration is Well-Posed)}

\begin{proof}
\emph{Continuity and PSD.} Each $r_i(\eta)$ is a continuous, non-negative function of $\eta \geq 0$ (with $r_i(0) = 0$), so $\mR(\eta) = \diag(r_1, \ldots, r_N)$ is continuous and PSD.

\emph{Loewner monotonicity.} For $\eta' > \eta \geq 0$: each $r_i(\eta') > r_i(\eta)$ by property (ii), so $\mR(\eta') - \mR(\eta)$ is a positive diagonal matrix, hence $\mR(\eta') \succ \mR(\eta)$ in the Loewner order.

\emph{Limiting behavior.} As $\eta \to \infty$, $r_i(\eta)/\eta \to 1$ for all $i$, so $\mR(\eta) \approx \eta \mI$ for large $\eta$. Since $\mH$ and $\mR(\eta)$ are both symmetric but do not commute in general, we bound $\kappa(\mH + \mR(\eta))$ via Weyl's inequalities. By Weyl's inequality for eigenvalue sums of Hermitian matrices:
\begin{align}
    \lambda_{\max}(\mH + \mR) &\leq \lambda_{\max}(\mH) + \lambda_{\max}(\mR) \leq \lambda_{\max}(\mH) + \eta, \\
    \lambda_{\min}(\mH + \mR) &\geq \lambda_{\min}(\mH) + \lambda_{\min}(\mR) = \lambda_{\min}(\mH) + \min_i r_i(\eta),
\end{align}
where the second line uses that $\mR$ is diagonal with eigenvalues $r_i(\eta)$. Since $r_i(\eta) = \eta^2/(\eta + a_i) = \eta(1 - a_i/(\eta + a_i))$, we have $\min_i r_i(\eta) = \eta(1 - a_{\max}/(\eta + a_{\max}))$ where $a_{\max} = \max_i a_i$. As $\eta \to \infty$, $\min_i r_i(\eta)/\eta \to 1$. Therefore:
\begin{equation}
    \kappa(\mH + \mR(\eta)) \leq \frac{\lambda_{\max}(\mH) + \eta}{\lambda_{\min}(\mH) + \eta(1 - a_{\max}/(\eta + a_{\max}))} \to 1 \quad \text{as } \eta \to \infty.
\end{equation}

\emph{Existence of $\etastar$.} If $\kappa(\mH) = 1$, the target $\kappa(\mH)^{1-\alpha} = 1$ is already met at $\eta^* = 0$. Otherwise, the function $g(\eta) = \kappa(\mH + \mR(\eta))$ is continuous, $g(0) = \kappa(\mH) > 1$, and $g(\eta) \to 1 < \kappa(\mH)^{1-\alpha}$ for $\alpha \in (0,1)$. By the intermediate value theorem, $\{\eta \geq 0 : g(\eta) \leq \kappa(\mH)^{1-\alpha}\}$ is nonempty. As a preimage of a closed set under a continuous function, it is closed; its intersection with $[0, M]$ for sufficiently large $M$ is compact, yielding the minimum $\etastar$.

\begin{remark}[Implementation]
We do not claim monotonicity of $g(\eta)$ in $\eta$; non-commutativity of $\mH$ and $\mR$ prevents this in general. The implementation searches a log-spaced grid of 60 points from $10^{-6}$ to $10^{8}$, identifies the first grid interval where the threshold is crossed, and refines with 30 bisection steps within that interval. The existence result guarantees a valid $\etastar$ exists on $[0, \infty)$; in all experiments, a feasible point was found.
\end{remark}
\end{proof}

\subsection{Proof of \texorpdfstring{\Cref{prop:bias_bound}}{Proposition} (Controlled MAP Shift)}

\begin{proof}
Let $\hat{\vf}_0$ satisfy $\nabla J_0(\hat{\vf}_0) = 0$ (baseline MAP) and $\hat{\vf}_\eta$ satisfy $\nabla J_0(\hat{\vf}_\eta) + \mR\hat{\vf}_\eta = 0$ (augmented MAP). Define $\Delta = \hat{\vf}_\eta - \hat{\vf}_0$.

By the fundamental theorem of calculus:
\begin{equation}
    \nabla J_0(\hat{\vf}_\eta) - \nabla J_0(\hat{\vf}_0) = \left(\int_0^1 \nabla^2 J_0(\hat{\vf}_0 + t\Delta)\,dt\right)\Delta = \bar{\mH}\Delta.
\end{equation}
From the augmented optimality: $\nabla J_0(\hat{\vf}_\eta) = -\mR\hat{\vf}_\eta = -\mR\hat{\vf}_0 - \mR\Delta$. Substituting:
\begin{equation}
    \bar{\mH}\Delta = -\mR\hat{\vf}_0 - \mR\Delta \implies (\bar{\mH} + \mR)\Delta = -\mR\hat{\vf}_0.
\end{equation}
Since $\bar{\mH} \succeq \mK^{-1} \succ 0$ and $\mR \succeq 0$, the matrix $\bar{\mH} + \mR$ is invertible. Setting $\mM = \bar{\mH}^{-1/2}\mR\bar{\mH}^{-1/2}$:
\begin{equation}
    \bar{\mH}^{1/2}\Delta = -(\mI + \mM)^{-1}\mM \bar{\mH}^{1/2}(\hat{\vf}_0).
\end{equation}
Taking norms: $\normH{\Delta} = \|\bar{\mH}^{1/2}\Delta\| \leq \|(\mI + \mM)^{-1}\mM\| \cdot \normH{\hat{\vf}_0}$. Since $\mM \succeq 0$ with largest eigenvalue $\mu_{\max}$, the operator norm $\|(\mI + \mM)^{-1}\mM\| = \mu_{\max}/(1 + \mu_{\max})$.
\end{proof}

\subsection{Proof of \texorpdfstring{\Cref{prop:conservative}}{Proposition} (Full Augmentation Contracts EUBO)}

\begin{proof}
Both posteriors share the MAP $\hat{\vf}_\eta$, so predictive means coincide: $\delta_{ab} = c_{ab}^\top \hat{\vf}_\eta$ for contrast $c_{ab} = e_a - e_b$.

\emph{Variance ordering.} Since $\mR \succeq 0$, $\mH_\eta + \mR \succeq \mH_\eta$, so $(\mH_\eta+\mR)^{-1} \preceq \mH_\eta^{-1}$ (inversion reverses Loewner order on the PD cone). Therefore $s_{\mathrm{aug}}^2(ab) = c_{ab}^\top(\mH_\eta+\mR)^{-1}c_{ab} \leq c_{ab}^\top \mH_\eta^{-1} c_{ab} = s_{\mathrm{MO}}^2(ab)$.

\emph{EUBO monotonicity.} For $q{=}2$ EUBO with $\mathrm{EUBO}(\delta,s) = \delta\Phi(\delta/s) + s\phi(\delta/s)$ (\Cref{eq:eubo}), differentiate at fixed $\delta$: $\partial\,\mathrm{EUBO}/\partial s = \phi(\delta/s) > 0$ for $s > 0$. (The $z^2\phi(z)$ terms from the chain rule cancel exactly.) Hence $s_{\mathrm{MO}} \geq s_{\mathrm{aug}}$ implies $\mathrm{EUBO}_{\mathrm{MO}} \geq \mathrm{EUBO}_{\mathrm{aug}}$.

\emph{Distortion bound.} By the mean value theorem, $\mathrm{EUBO}_{\mathrm{MO}} - \mathrm{EUBO}_{\mathrm{aug}} = \phi(\delta/\tilde{s})(s_{\mathrm{MO}} - s_{\mathrm{aug}})$ for some $\tilde{s} \in [s_{\mathrm{aug}}, s_{\mathrm{MO}}]$. Since $\phi \leq 1/\sqrt{2\pi}$, we get $\mathrm{EUBO}_{\mathrm{MO}} - \mathrm{EUBO}_{\mathrm{aug}} \leq (s_{\mathrm{MO}} - s_{\mathrm{aug}})/\sqrt{2\pi}$.

\emph{Spectral control.} Let $\mM = \mH_\eta^{-1/2}\mR\mH_\eta^{-1/2}$ with eigenvalues $\lambda_i$. Then $s_{\mathrm{MO}}^2/s_{\mathrm{aug}}^2 \leq 1 + \lambda_{\max}(\mM)$, so the distortion is controlled by the spectrum of $\mH_\eta^{-1/2}\mR\mH_\eta^{-1/2}$.

\emph{Extension to fresh query points.}
Let $x_a, x_b \in \mathcal{X}$ be arbitrary query points (possibly $x_a, x_b \notin X$), and define $r_{ab} := k(X, x_a) - k(X, x_b) \in \R^N$ and $\nu_{ab} := k(x_a,x_a) + k(x_b,x_b) - 2k(x_a,x_b) - r_{ab}^\top \mK^{-1} r_{ab} \geq 0$ (nonnegative by the Schur complement of the joint GP prior). Under a Laplace posterior $\mathcal{N}(\hat{\vf}_\eta, \Sigma)$ on the training latents, the GP conditional gives $f(x_a) - f(x_b) \mid \vf \sim \mathcal{N}(r_{ab}^\top \mK^{-1} \vf, \nu_{ab})$, so by total expectation and variance:
\begin{equation}
\delta_{ab} = r_{ab}^\top \mK^{-1} \hat{\vf}_\eta, \quad s_\Sigma^2(ab) = \nu_{ab} + r_{ab}^\top \mK^{-1} \Sigma\, \mK^{-1} r_{ab}.
\end{equation}
For MAP-only, $\Sigma_{\mathrm{MO}} = \mH_\eta^{-1}$; for full augmentation, $\Sigma_{\mathrm{aug}} = (\mH_\eta + \mR)^{-1}$. Since $(\mH_\eta + \mR)^{-1} \preceq \mH_\eta^{-1}$ and congruence preserves Loewner order, $r_{ab}^\top \mK^{-1}(\mH_\eta+\mR)^{-1}\mK^{-1} r_{ab} \leq r_{ab}^\top \mK^{-1}\mH_\eta^{-1}\mK^{-1} r_{ab}$, giving $s_{\mathrm{aug}}^2(ab) \leq s_{\mathrm{MO}}^2(ab)$. The mean $\delta_{ab}$ is identical (shared MAP). EUBO monotonicity then gives $\mathrm{EUBO}_{\mathrm{aug}}(a,b) \leq \mathrm{EUBO}_{\mathrm{MO}}(a,b)$ for all pairs, including fresh query points. When $x_a, x_b \in X$, $r_{ab} = \mK(e_a - e_b)$ and $\nu_{ab} = 0$, recovering the in-sample case.
\end{proof}

\subsection{Proof of \texorpdfstring{\Cref{thm:pseudo_confidence}}{Theorem} (Pseudo-Confidence in Null Directions)}

\begin{proof}
In $(\xi,\zeta)$-coordinates, the MAP-only precision is $\mathbf{Q}^\top \mH_\eta \mathbf{Q} = \bigl[\begin{smallmatrix} \mathbf{A} & \mathbf{B} \\ \mathbf{B}^\top & \mathbf{D}_\eta \end{smallmatrix}\bigr]$ (with $\mathbf{A} = \mathbf{Q}_0^\top \Kinv \mathbf{Q}_0$, since $\mathbf{Q}_0^\top \Hlik \mathbf{Q}_0 = \mathbf{0}$), and the augmented precision adds $\mathbf{Q}^\top \Rmat \mathbf{Q} = \bigl[\begin{smallmatrix} \mathbf{R}_{00} & \mathbf{R}_{01} \\ \mathbf{R}_{01}^\top & \mathbf{R}_{11} \end{smallmatrix}\bigr]$. The Gaussian conditional covariance of $\xi$ given $\zeta$ equals the inverse of the $(\xi,\xi)$-block of the precision. For MAP-only this is $\mathbf{A}^{-1}$; for full augmentation it is $(\mathbf{A} + \mathbf{R}_{00})^{-1}$. Since $\Rmat \succ 0$ and $\mathbf{Q}_0$ has full column rank, $\mathbf{R}_{00} = \mathbf{Q}_0^\top \Rmat \mathbf{Q}_0 \succ 0$, so $(\mathbf{A} + \mathbf{R}_{00})^{-1} \prec \mathbf{A}^{-1}$.
\end{proof}

\subsection{Proof of \texorpdfstring{\Cref{thm:one_step_regret}}{Theorem} (Argmax Stability)}

\begin{proof}
Let $(a^*_{\mathrm{aug}}, b^*_{\mathrm{aug}})$ maximize $\mathrm{EUBO}_{\mathrm{aug}}$. Telescoping gives $\mathrm{EUBO}_{\mathrm{MO}}(a^*_{\mathrm{MO}}, b^*_{\mathrm{MO}}) - \mathrm{EUBO}_{\mathrm{MO}}(a^*_{\mathrm{aug}}, b^*_{\mathrm{aug}}) \leq 2\sup_{a,b} |\mathrm{EUBO}_{\mathrm{MO}}(a,b) - \mathrm{EUBO}_{\mathrm{aug}}(a,b)|$. By the mean value theorem and $\partial\,\mathrm{EUBO}/\partial s = \phi(\delta/s) \leq 1/\sqrt{2\pi}$, each pointwise gap is at most $(s_{\mathrm{MO}} - s_{\mathrm{aug}})/\sqrt{2\pi}$. For each pair $(a,b)$, let $b_{ab}$ denote the contrast vector such that $s_{\mathrm{MO}}^2(a,b) = \nu_{ab} + b_{ab}^\top \mH_\eta^{-1} b_{ab}$ (with $\nu_{ab} \geq 0$ the Schur-complement term from \Cref{prop:conservative}). Then $s_{\mathrm{aug}}^2 \geq s_{\mathrm{MO}}^2/(1+\lambda_1)$, so $s_{\mathrm{MO}} - s_{\mathrm{aug}} \leq s_{\mathrm{MO}}\lambda_1/2$. Combining: the gap is at most $\sup_{a,b} s_{\mathrm{MO}}(a,b)\lambda_1/\sqrt{2\pi}$. The supremum is finite when the kernel $k$ is continuous and $\mathcal{X}$ is compact: by the Cauchy--Schwarz inequality for positive-definite kernels, $|k(x,y)| \leq \sqrt{k(x,x)k(y,y)}$, so $s_{\mathrm{MO}}^2(a,b) \leq 4\sup_x k(x,x)$ for all $(a,b)$. Since $\lambda_1 \leq \|\Rmat\|/\lmin(\mH_\eta) < \eta/\lmin(\mH_\eta)$, the gap is $O(\eta/\lmin(\mH_\eta))$.
\end{proof}

\subsection{Frozen-Utility Monotonicity}

\begin{proposition}[Frozen-utility monotonicity]
\label{prop:weak_mode_mass}
Fix latent utilities $\vf \in \R^N$. Let $\Hlik(\vf) = \sum_{m=1}^M \omega_m(\vf)\,\mC_m\mC_m^\top$ and let $\Hlik'(\vf)$ denote the Hessian after removing comparison $m'$, with all weights evaluated at the same fixed $\vf$. Then $\Hlik'(\vf) = \Hlik(\vf) - \omega_{m'}(\vf)\,\mC_{m'}\mC_{m'}^\top \preceq \Hlik(\vf)$.
This is a frozen-curvature statement: it does not compare Hessians obtained after refitting the MAP on the reduced dataset, since refitting changes the remaining weights.
\end{proposition}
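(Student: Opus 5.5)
The plan is to prove this directly from the sum-of-outer-products form of $\Hlik$ established in \Cref{prop:graph_laplacian}. Fix $\vf$ once and for all. Every weight $\omega_m(\vf)$ is then a fixed number. In particular, the weights of the comparisons that remain after removal are unchanged. This is exactly what the frozen-curvature hypothesis provides.

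First, I write $\Hlik(\vf) = \sum_{m=1}^M \omega_m(\vf)\,\mC_m\mC_m^\top$. I justify this by summing the per-comparison Hessians $-\nabla^2\mathcal{L}_m = \omega_m \mC_m\mC_m^\top$ computed in the proof of \Cref{prop:graph_laplacian}.

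Removing comparison $m'$ deletes exactly one term of the negative log-likelihood. Because the weights are evaluated at the same fixed $\vf$, the reduced Hessian is the same sum with the $m'$ term dropped. This gives the identity $\Hlik'(\vf) = \Hlik(\vf) - \omega_{m'}(\vf)\,\mC_{m'}\mC_{m'}^\top$.

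Next comes the Loewner ordering. For any $v \in \R^N$,
\[
v^\top\bigl(\Hlik(\vf) - \Hlik'(\vf)\bigr)v = \omega_{m'}(\vf)\,(\mC_{m'}^\top v)^2 \geq 0,
\]
using $\omega_{m'}(\vf) > 0$. That positivity was shown in \Cref{prop:graph_laplacian} via the Mills-ratio inequality $\psi(z) + z > 0$, and it requires $\vf$ to have finite comparison scores, which is automatic for $\vf \in \R^N$. Hence the difference is PSD, and $\Hlik'(\vf) \preceq \Hlik(\vf)$.

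A remark can follow: adding $\Kinv$ to both sides preserves the ordering, so the full Hessians satisfy the same inequality at fixed $\vf$.

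There is no real obstacle here. The only point needing care is conceptual rather than technical: the weights must not be re-evaluated after the data change. I would therefore state explicitly that $\omega_m$ depends on $\vf$ only through $z_m = \mC_m^\top\vf/(\probitscale\sqrt{2})$, so that fixing $\vf$ freezes every remaining weight. I would also note that refitting the MAP would move $\vf$ and could change the remaining $\omega_m$ in either direction, which is why the statement is restricted to frozen utilities.
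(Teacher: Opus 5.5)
Your proposal is correct and follows the paper's proof: at fixed $\vf$ the difference $\Hlik(\vf) - \Hlik'(\vf) = \omega_{m'}(\vf)\,\mC_{m'}\mC_{m'}^\top$ is PSD because $\omega_{m'}(\vf) > 0$. Your explicit quadratic-form computation and the remark about adding $\Kinv$ are harmless elaborations of that same one-line argument.
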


\begin{proof}
$\omega_{m'}(\vf)\,\mC_{m'}\mC_{m'}^\top \succeq 0$ since $\omega_{m'}(\vf) > 0$, so $\Hlik(\vf) - \Hlik'(\vf) \succeq 0$.
\end{proof}

\section{Mechanism Ablation}
\label{app:ablation}

\begin{table*}[t]
\centering
\caption{Mechanism ablation: gain (\%) over baseline B (pool\_n6\_k15, 60 seeds).
Static KS: full prior-gated diagonal correction. Jitter: uniform $R{=}cI$ (no prior gating).
Diagonal: uniform diagonal regularization ($\hat{R}{=}\eta I$).
MetricOnly / CovOnly: partial mechanisms.
$^{*}$: $p{<}0.05$, $^{**}$: $p{<}0.01$ (paired $t$-test vs.\ B).}
\label{tab:ablation}
\small
\begin{tabular}{l c c c c c c}
\toprule
Benchmark & $d$ & Static KS & Jitter & Diagonal & MetricOnly & CovOnly \\
\midrule
  DTLZ2-16D & 16 & \textcolor{ForestGreen}{\textbf{+24.5}$^{**}$} & \textcolor{red}{-22.6$^{**}$} & \textcolor{ForestGreen}{\textbf{+30.7}$^{**}$} & +4.3 & +7.0 \\
  DTLZ2-8D & 8 & \textcolor{ForestGreen}{\textbf{+23.2}$^{*}$} & \textcolor{red}{-47.8$^{**}$} & +1.0 & +13.3 & -14.9 \\
  DTLZ2-20D & 20 & \textcolor{ForestGreen}{\textbf{+17.8}$^{**}$} & \textcolor{red}{-21.0$^{**}$} & \textcolor{ForestGreen}{\textbf{+18.9}$^{**}$} & +4.7 & +2.4 \\
  Plasma-16D & 16 & \textcolor{ForestGreen}{\textbf{+9.5}$^{**}$} & +0.4 & \textcolor{ForestGreen}{\textbf{+8.9}$^{*}$} & +2.3 & -2.9 \\
  Levy-10D & 10 & \textcolor{ForestGreen}{\textbf{+18.7}$^{*}$} & -10.3 & +10.5 & \textcolor{ForestGreen}{\textbf{+18.5}$^{*}$} & \textcolor{ForestGreen}{\textbf{+20.9}$^{*}$} \\
  Ackley-8D & 8 & \textcolor{red}{-15.6$^{*}$} & \textcolor{red}{-45.0$^{**}$} & \textcolor{red}{-22.8$^{**}$} & +2.2 & -15.4 \\
  Hartmann-6D & 6 & \textcolor{red}{-4.8$^{**}$} & \textcolor{red}{-7.0$^{**}$} & \textcolor{red}{-6.7$^{**}$} & +0.2 & -2.1 \\
  Vehicle-5D & 5 & -0.0 & -0.0 & -0.0 & +0.0 & -0.0 \\
\midrule
  \textit{W / L / N} &  & 5/2/1 & 0/5/3 & 3/2/3 & 1/0/7 & 1/0/7 \\
\bottomrule
\end{tabular}
\end{table*}

Uniform jitter adds a fixed constant $c$ to every diagonal entry of the posterior precision ($R{=}cI$, where $c$ is chosen to match the trace of the KS correction); it scores 0W/5L. Diagonal regularization ($\hat{R}{=}\eta I$) uses the $\kappa$-calibrated strength $\eta^*$ but applies it uniformly without the $(\mK^{-1})_{ii}$ weighting; it scores 3W/2L, with $+30.7\%$ on DTLZ2-16D but the same losses as Static KS.

MetricOnly adds $\Rmat$ to the Hessian used in the Newton step for computing the MAP, but evaluates $J_0$ (the original objective without $\Rmat$) at each Newton iterate. Since the objective being minimized is unchanged, the MAP estimate is the same as the baseline; only the Newton search direction is affected. CovOnly keeps the standard MAP estimate $\hat{\vf}_0$ but uses $(\mH + \Rmat)^{-1}$ instead of $\mH^{-1}$ as the Laplace covariance. Both score 1W/0L/7N, indicating that the improvement comes from shifting the MAP estimate (as in the full KappaSharp correction), not from changing the Newton search direction or tightening the covariance alone.

\section{Initialization Robustness}
\label{app:init_robustness}

\begin{figure}[h]
    \centering
    \includegraphics[width=0.95\textwidth]{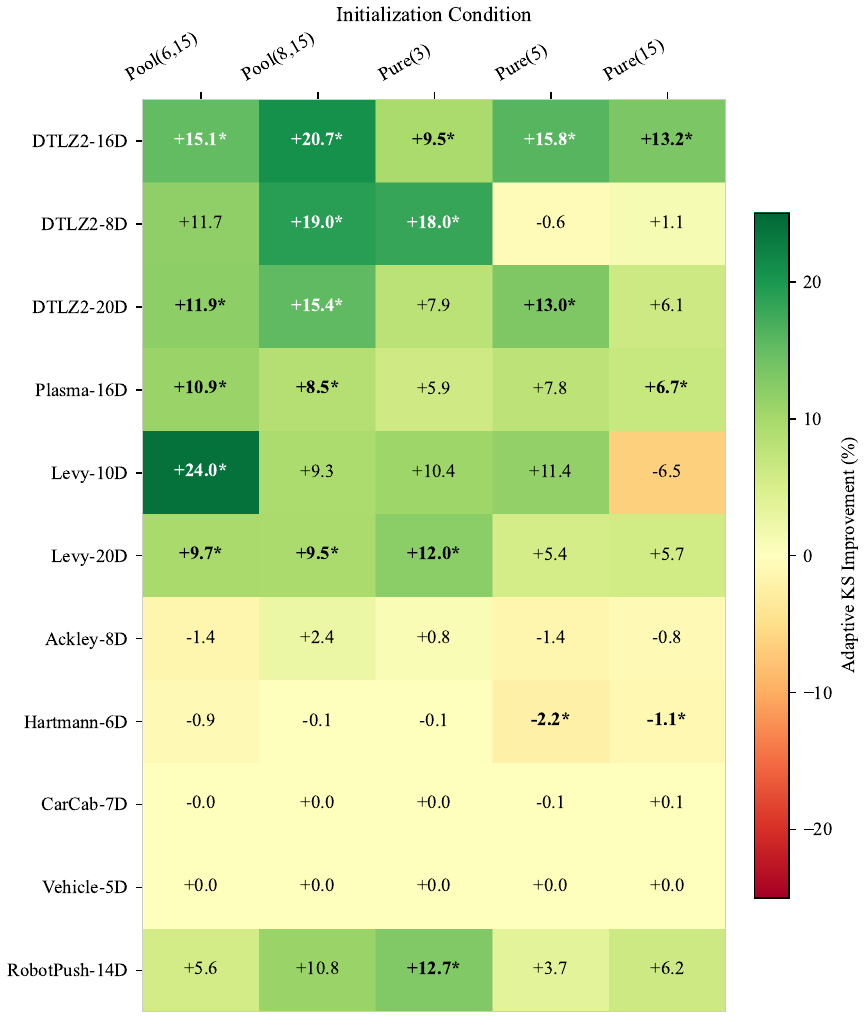}
    \caption{Adaptive KS improvement (\%) over baseline for each of the 55 benchmark--initialization cells. Each cell shows the mean improvement over 60 seeds. Bold with asterisk ($^*$): $p < 0.05$ (paired $t$-test). Green = improvement, red = regression. The two losses (Hartmann-6D under pure\_k5 and pure\_k15) are both below $2.5\%$.}
    \label{fig:heatmap}
\end{figure}

\subsection{Proof of \texorpdfstring{\Cref{prop:info_dim}}{Proposition} (Effective Information Dimension)}

\begin{proof}
Let $\tilde{\mM} = \mK^{1/2}\Hlik\mK^{1/2} \succeq 0$ with eigenvalues $\mu_1 \geq \cdots \geq \mu_N \geq 0$. Since $\Hlik$ has rank $N{-}\Ncc$ (\Cref{prop:graph_laplacian}), $\tilde{\mM}$ has exactly $\Ncc$ zero eigenvalues.

\emph{(a)} $d_{\mathrm{eff}} = \sum_{i=1}^N \mu_i/(1+\mu_i)$. Each term satisfies $0 \leq \mu_i/(1+\mu_i) < 1$, with equality to $0$ iff $\mu_i = 0$. Since exactly $\Ncc$ eigenvalues are zero, at most $N{-}\Ncc$ terms are positive, giving $d_{\mathrm{eff}} \leq N{-}\Ncc$. Equality requires all nonzero $\mu_i \to \infty$.

\emph{(b)} Under the fresh-pair assumption with $N_{\mathrm{cc},0}$ initial connected components: each of the $T$ queries adds two unseen points and one new component, so $\Ncc = N_{\mathrm{cc},0} + T$ and $N = 2T + N_0$. Therefore $\rank(\Hlik) = N - \Ncc = N_0 + T - N_{\mathrm{cc},0}$ and $d_{\mathrm{eff}} \leq N_0 + T - N_{\mathrm{cc},0}$. Hence $d_{\mathrm{eff}}/N \leq (N_0 + T - N_{\mathrm{cc},0})/(2T + N_0) \to 1/2$ as $T \to \infty$ (with $N_0, N_{\mathrm{cc},0}$ fixed).

\emph{(c)} Connected ($\Ncc{=}1$): $d_{\mathrm{eff}} \leq N{-}1$. The rank ceiling is lifted to $N{-}1$, though the realized value depends on the magnitude of comparison weights (a connected graph with vanishing edge weights can still have $d_{\mathrm{eff}} \ll N{-}1$).

\emph{(d)} Let $\mB = \mK^{1/2}\Rmat\mK^{1/2} \succeq \beta\mI$ with $\beta > 0$. Define $g(\mathbf{X}) = \tr\bigl(\mathbf{X}(\mI+\mathbf{X})^{-1}\bigr) = N - \tr\bigl((\mI+\mathbf{X})^{-1}\bigr)$. Since $g$ is monotone increasing under the Loewner order (adding PSD mass decreases $(\mI+\mathbf{X})^{-1}$), and $\tilde{\mM} + \mB \succeq \tilde{\mM} + \beta\mI$, we have $d_{\mathrm{eff}}^{\mathrm{aug}} = g(\tilde{\mM}+\mB) \geq g(\tilde{\mM} + \beta\mI)$. Since $\beta\mI$ commutes with $\tilde{\mM}$:
\begin{equation}
g(\tilde{\mM} + \beta\mI) = \sum_{i=1}^N \frac{\mu_i + \beta}{1 + \mu_i + \beta} = d_{\mathrm{eff}} + \sum_{i=1}^N \frac{\beta}{(1+\mu_i)(1+\mu_i+\beta)}.
\end{equation}
Each term in the sum is nonnegative, and the $\Ncc$ terms with $\mu_i = 0$ each contribute $\beta/(1{+}\beta)$, giving the bound $d_{\mathrm{eff}}^{\mathrm{aug}} \geq d_{\mathrm{eff}} + \Ncc\beta/(1{+}\beta)$.
\end{proof}

\section{Threshold Interpretation}
\label{app:threshold_interp}

\begin{proposition}[Per-step decisiveness threshold]
\label{prop:margin_safe}
Since $p_t = \Phi(|\cdot|) \in [1/2, 1)$ by construction, the mapping $p \mapsto s_{\mathrm{exp}} = 1 - \mathcal{H}(p)/\log 2$ is strictly increasing on $[1/2, 1)$. Hence $s_{\mathrm{exp},t} \geq \tau$ implies $p_t \geq p(\tau)$. At $\tau = 0.30$: $p_t \geq 0.81$.
\end{proposition}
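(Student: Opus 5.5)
The statement reduces to two claims. First, the map $p \mapsto s_{\mathrm{exp}}(p) = 1 - \mathcal{H}(p)/\log 2$ is strictly increasing on $[1/2,1)$. Second, the unique preimage of $\tau = 0.30$ under this map is approximately $0.81$. My plan is to establish monotonicity by differentiation and then invert numerically.

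First I will confirm the domain. Since $\mu_{\mathrm{diff}} \geq 0$ and the denominator $\sqrt{\sigma_{\mathrm{diff}}^2 + 2\probitscale^2} > 0$, the argument of $\Phi$ is a finite nonnegative number. Hence $p_t \in [1/2,1)$, with $p_t < 1$ because $\Phi < 1$ at finite arguments.

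Next, for the binary entropy I compute
\[
\mathcal{H}'(p) = \log\frac{1-p}{p}.
\]
This is zero at $p = 1/2$ and strictly negative for $p \in (1/2,1)$, since there $(1-p)/p < 1$. Consequently
\[
s_{\mathrm{exp}}'(p) = \frac{1}{\log 2}\log\frac{p}{1-p} > 0 \quad \text{on } (1/2,1).
\]
Together with continuity at $p = 1/2$, this makes $s_{\mathrm{exp}}$ strictly increasing on $[1/2,1)$. It also runs continuously from $s_{\mathrm{exp}}(1/2) = 0$ to $\lim_{p\to 1} s_{\mathrm{exp}}(p) = 1$. By the intermediate value theorem, for each $\tau \in (0,1)$ there is a unique $p(\tau) \in (1/2,1)$ with $s_{\mathrm{exp}}(p(\tau)) = \tau$. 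Strict monotonicity then gives $s_{\mathrm{exp},t} \geq \tau \iff p_t \geq p(\tau)$; in particular the claimed implication holds.

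Finally, for $\tau = 0.30$, I must solve $\mathcal{H}(p) = 0.7\log 2 \approx 0.4852$ nats. I will bracket the root by evaluating $\mathcal{H}$ at nearby points and using monotonicity.
\begin{itemize}
\item[] At $p = 0.81$: $-0.81\log 0.81 - 0.19\log 0.19 \approx 0.1707 + 0.3155 = 0.4862 > 0.4852$.
\item[] At $p = 0.812$: $-0.812\log 0.812 - 0.188\log 0.188 \approx 0.1691 + 0.3142 = 0.4833 < 0.4852$.
\end{itemize}
Since $\mathcal{H}$ is decreasing on $(1/2,1)$, this gives $p(0.30) \in (0.810, 0.812)$, so $p(0.30) \approx 0.81$.

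The main obstacle is only numerical precision. The true threshold is about $0.811$, slightly above $0.81$. The honest statement is therefore $p_t \geq p(0.30) \approx 0.811$, which in particular implies $p_t > 0.81$, so the claim as stated is consistent. I would present the bracket explicitly so that the rounding direction is justified rather than asserted.
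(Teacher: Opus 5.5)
Your proposal is correct and follows the same route as the paper: strict decrease of $\mathcal{H}$ on $(1/2,1)$, then numerical inversion of $1-\mathcal{H}(p)/\log 2 = 0.30$, which the paper reports as $p \approx 0.8107$. Your explicit derivative, the intermediate-value uniqueness argument, and the bracket $(0.810, 0.812)$ make both the monotonicity and the rounding direction ($p(0.30) > 0.81$) fully rigorous, details the paper's two-line proof leaves implicit.
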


\begin{proof}
$\mathcal{H}(p)/\log 2$ is maximized at $p = 0.5$ and strictly decreasing on $(0.5, 1)$. Solving $1 - \mathcal{H}(p)/\log 2 = 0.30$ yields $p \approx 0.8107$.
\end{proof}

On multi-objective benchmarks where the disconnected-graph problem is severe, the model quickly becomes confident about queried comparisons and $\bar{s}_t$ rises above $\tau$, activating the correction. On smooth single-objective basins (e.g., Ackley), the model remains uncertain during exploration and $\bar{s}_t$ stays low, keeping the correction off.

\section{Convergence Curves}
\label{app:convergence}

\begin{figure}[h]
    \centering
    \includegraphics[width=0.95\textwidth]{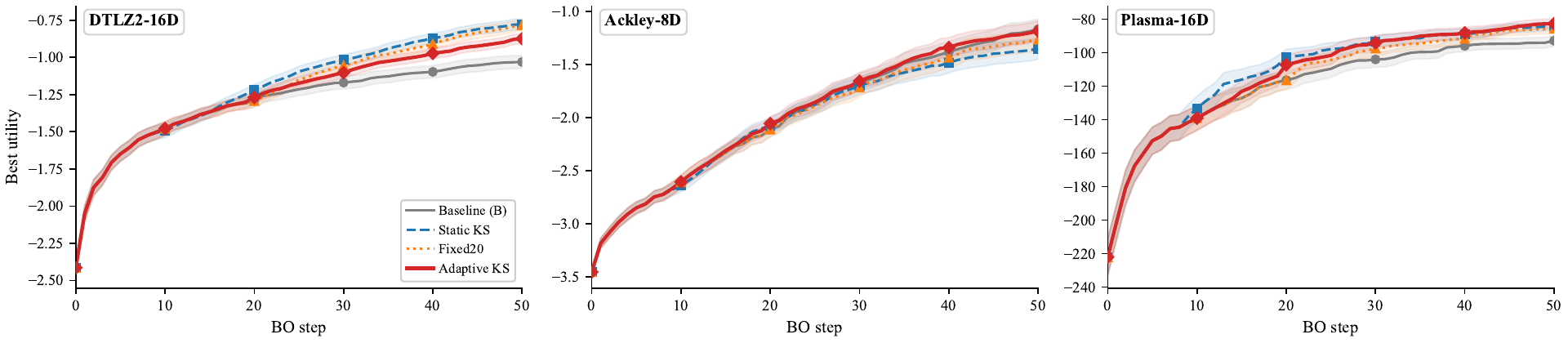}
    \caption{Convergence (mean $\pm$ s.e., 60 seeds). Left: DTLZ2-16D. Center: Ackley-8D. Right: Plasma-16D ($+10.9\%$).}
    \label{fig:convergence}
\end{figure}

\section{Activation Behavior}
\label{app:gate_behavior}

\begin{figure}[h]
    \centering
    \includegraphics[width=0.95\textwidth]{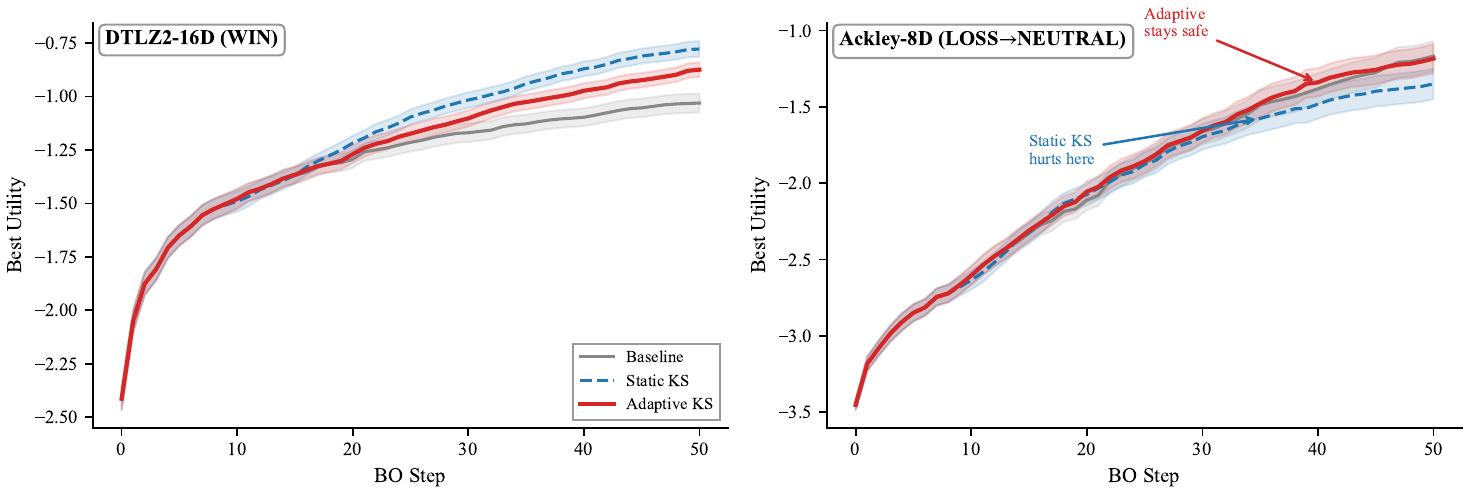}
    \caption{Activation behavior on WIN (DTLZ2-16D) vs.\ LOSS$\to$NEUTRAL (Ackley-8D) benchmarks. On DTLZ2-16D, the activation rule enables the correction frequently (median $\approx 40\%$ of steps); on Ackley-8D, low model decisiveness keeps the correction largely off (median $\approx 12\%$), preventing the MAP shift that causes Static KS's loss.}
    \label{fig:gate}
\end{figure}

\section{Additional Experimental Details}
\label{app:details}

\subsection{Benchmark Descriptions}

\paragraph{DTLZ2 (8D, 16D, 20D).}
Standard multi-objective test function~\citep{gonzalez2017preferential} with $d$-dimensional input mapped to a 4-dimensional outcome space, then to scalar utility via a piecewise-linear aggregation. The Pareto front is a unit hypersphere in outcome space.

\paragraph{Plasma-16D.}
Real industrial application: calibrating a 16-parameter model predictive controller for atmospheric plasma surface treatment~\citep{shao2025coactive}. Each function evaluation solves an MPC optimization with physics simulation. The utility combines temperature uniformity and treatment speed, assessed by human inspection of temperature distributions.

\paragraph{Ackley-8D.}
Standard global optimization benchmark on $[-1,1]^8$ with many shallow local optima surrounding a single global minimum. The smooth basin structure provides strong local curvature, making the Laplace approximation already well-conditioned.

\paragraph{Hartmann-6D.}
Standard 6-dimensional test function with 6 local minima. Like Ackley, the smooth landscape provides sufficient local curvature for the baseline.

\paragraph{Levy-10D, Levy-20D.}
Multi-modal test functions with increasingly complex landscape at higher dimensions. The 20D variant tests scalability.

\paragraph{CarCab-7D, Vehicle-5D, RobotPush-14D.}
CarCab: 7-dimensional vehicle cabin design. Vehicle: 5-dimensional crash safety optimization. RobotPush: 14-dimensional robotic pushing task simulated with Box2D physics, featuring contact discontinuities.

\subsection{Hyperparameters}

All experiments use BoTorch 0.6.0, GPyTorch 1.6.0, PyTorch 1.11.0. PairwiseGP uses default RBF kernel with automatic relevance determination (ARD). EUBO is optimized with 4 restarts and 32 raw samples. The precision correction activates at step~8 with $\alpha = 0.1$. The $\etastar$ solve uses a grid search on 60 log-spaced points from $10^{-6}$ to $10^8$.

Init tables are generated with hash-based independent RNG per (benchmark, seed, stream) using \texttt{gen\_init\_tables\_v3.py}. Pool initializations use nested point sets ($n_6 \subset n_8$) with independently shuffled comparison orderings.

\paragraph{Activation step.}
The correction is eligible from step~8 by default. In a preliminary study using Static KS on DTLZ2-16D (30 seeds, separate from the main 60-seed runs), we tested eligibility at steps 5, 8, 10, and 12; final improvement ranged from $+19\%$ to $+24\%$, indicating low sensitivity to the exact activation step. Earlier eligibility (step~5) occasionally produced unstable $\etastar$ estimates; later eligibility (step~12) delayed the benefit without improving it. All main-table results use step~8 with 60 seeds.

\paragraph{Compute.}
Experiments were run on a university HPC cluster (AMD EPYC 7543 and Intel Xeon Platinum CPUs, 64-core nodes). Total compute: approximately 5{,}000 CPU-hours across $16{,}500$ BO runs in the main grid plus ablations and sensitivity studies. All 60-seed conditions use paired same-node execution with common random numbers (CRN).

\section{Threshold Sensitivity}
\label{app:tau_sensitivity}

\begin{table}[h]
\centering
\caption{Threshold sensitivity: Adaptive KS gain (\%) over baseline for $\tau \in \{0.25, 0.30, 0.35\}$ on 4 representative benchmarks (pool\_n6\_k15, 60 seeds). All three thresholds produce zero losses. $^{*}$: $p{<}0.05$, $^{**}$: $p{<}0.01$ (paired $t$-test).}
\label{tab:tau}
\small
\begin{tabular}{l ccc}
\toprule
Benchmark & $\tau = 0.25$ & $\tau = 0.30$ & $\tau = 0.35$ \\
\midrule
DTLZ2-16D & +20.2$^{**}$ & +15.1$^{**}$ & +20.1$^{**}$ \\
Levy-10D & +25.0$^{**}$ & +24.0$^{**}$ & +22.4$^{**}$ \\
Ackley-8D & -2.3 & -1.4 & +4.3 \\
Plasma-16D & +9.9$^{**}$ & +10.9$^{**}$ & +7.0$^{*}$ \\
\midrule
\textit{Losses} & 0 & 0 & 0 \\
\bottomrule
\end{tabular}
\end{table}

The threshold $\tau = 0.30$ has a per-step interpretation: when $s_{\mathrm{exp},t} \geq 0.30$, the model's confidence in its own prediction satisfies $p_t \geq 0.81$ (\Cref{prop:margin_safe}). The implementation triggers on the EMA $\bar{s}_t$ rather than the raw score; since $s_{\mathrm{exp}}$ is nonlinear in $p$, $\bar{s}_t \geq \tau$ does not imply an EMA-weighted average $p \geq 0.81$, but it does indicate that recent steps have exhibited high decisiveness on average. Lower $\tau$ (more aggressive) increases activation rate and gains on WIN benchmarks. All three thresholds produce zero statistically significant losses.

\section{Calibration Parameter $\alpha$ Sensitivity}
\label{app:alpha_sensitivity}

\begin{table}[h]
\centering
\caption{Static KS gain (\%) over baseline for $\alpha \in \{0.05, 0.1, 0.2\}$ on 4 benchmarks (pool\_n6\_k15). Larger $\alpha$ targets a lower condition number, producing stronger augmentation. $\alpha = 0.1$ is a reasonable operating point; the tradeoff is not monotone (e.g., Ackley at $\alpha{=}0.05$ is worse than at $\alpha{=}0.1$). $^{*}$: $p{<}0.05$, $^{**}$: $p{<}0.01$.}
\label{tab:alpha}
\small
\begin{tabular}{l ccc}
\toprule
Benchmark & $\alpha = 0.05$ & $\alpha = 0.1$ & $\alpha = 0.2$ \\
\midrule
DTLZ2-16D & +18.8$^{**}$ & +24.5$^{**}$ & +31.8$^{**}$ \\
Plasma-16D & +6.7 & +9.5$^{**}$ & +17.4$^{**}$ \\
Ackley-8D & $-$31.4$^{*}$ & $-$15.6$^{*}$ & $-$51.0$^{**}$ \\
Hartmann-6D & $-$1.0 & $-$4.8$^{**}$ & $-$11.8$^{**}$ \\
\bottomrule
\end{tabular}
\end{table}

\section{Connected Querying Full Results}
\label{app:cpbo}

\begin{table}[h]
\centering
\caption{Connected querying~\citep{xu2024principled} detailed results (pool\_n6\_k15, 60 seeds). One candidate per query is fixed to the previous query point ($x'_t = x_{t-1}$); EUBO is retained as the acquisition function for fair comparison. $^{*}$: $p < 0.05$, $^{**}$: $p < 0.01$.}
\label{tab:cpbo}
\small
\begin{tabular}{l c cc}
\toprule
Benchmark & $d$ & Conn.\ Query Gain (\%) & Verdict \\
\midrule
DTLZ2-16D & 16 & $+1.5$ & NEUTRAL \\
DTLZ2-8D & 8 & \textcolor{red}{$-93.9^{**}$} & LOSS \\
DTLZ2-20D & 20 & $+6.2$ & NEUTRAL \\
Plasma-16D & 16 & \textcolor{red}{$-11.5^{*}$} & LOSS \\
Levy-10D & 10 & \textcolor{red}{$-21.7^{*}$} & LOSS \\
Levy-20D & 20 & $-8.7$ & NEUTRAL \\
Ackley-8D & 8 & \textcolor{red}{$-61.9^{**}$} & LOSS \\
Hartmann-6D & 6 & \textcolor{red}{$-11.8^{**}$} & LOSS \\
CarCab-7D & 7 & $-0.2$ & NEUTRAL \\
Vehicle-5D & 5 & \textcolor{red}{$-0.0^{*}$} & LOSS \\
RobotPush-14D & 14 & \textcolor{red}{$-24.8^{**}$} & LOSS \\
\midrule
\textit{W / L / N} & & \multicolumn{2}{c}{0 / 7 / 4} \\
\bottomrule
\end{tabular}
\end{table}

Connected querying enforces a connected comparison graph by fixing one candidate per query to the previous query point ($x'_t = x_{t-1}$). Note that the original algorithm of \citet{xu2024principled} uses a different acquisition function based on optimistic likelihood-ratio confidence sets; here we adopt only their querying strategy and retain EUBO as the acquisition function so that the comparison isolates the effect of graph connectivity. Fixing one candidate to the previous point limits exploration, resulting in losses on 7 of 11 benchmarks, including $-93.9\%$ on DTLZ2-8D and $-61.9\%$ on Ackley-8D.

\section{Non-Laplace Inference Baselines}
\label{app:ep_gibbs}

EP-EI and Gibbs-EI results on 4 representative benchmarks (pool\_n6\_k15, 60 seeds):

\begin{table}[h]
\centering
\small
\begin{tabular}{l cc cc}
\toprule
Benchmark & EP-EI Gain & Verdict & Gibbs-EI Gain & Verdict \\
\midrule
DTLZ2-16D & $-75.2\%^{**}$ & LOSS & $+9.6\%$ & NEUTRAL \\
Ackley-8D & $-130.2\%^{**}$ & LOSS & $-52.5\%^{**}$ & LOSS \\
Plasma-16D & $-73.1\%^{**}$ & LOSS & $-40.3\%^{**}$ & LOSS \\
Levy-10D & $-82.9\%^{**}$ & LOSS & $-22.0\%^{*}$ & LOSS \\
\bottomrule
\end{tabular}
\caption{Non-Laplace inference baselines (diagnostic). Both use EI acquisition, not EUBO, which contributes to the poor performance. $^{*}$: $p<0.05$, $^{**}$: $p<0.01$.}
\end{table}

\section{Wall-Clock Overhead}
\label{app:wallclock}

\begin{table}[!h]
\centering
\caption{Wall-clock time (seconds) per 50-step BO run. KappaSharp adds negligible overhead
and is often \emph{faster} due to improved Hessian conditioning.}
\label{tab:wallclock}
\small
\begin{tabular}{l ccc}
\toprule
Benchmark & Baseline (s) & KappaSharp (s) & Overhead \\
\midrule
  DTLZ2-16D & 120$\pm$42 & 148$\pm$43 & +23\% \\
  DTLZ2-8D & 126$\pm$46 & 129$\pm$43 & +2\% \\
  Plasma-16D & 272$\pm$81 & 271$\pm$83 & \textcolor{green!60!black}{-0\%} \\
  Levy-10D & 156$\pm$47 & 146$\pm$38 & \textcolor{green!60!black}{-7\%} \\
  Ackley-8D & 126$\pm$44 & 128$\pm$43 & +2\% \\
  Hartmann-6D & 118$\pm$50 & 102$\pm$36 & \textcolor{green!60!black}{-13\%} \\
  Vehicle-5D & 48$\pm$10 & 54$\pm$9 & +11\% \\
  RobotPush-14D & 831$\pm$84 & 817$\pm$80 & \textcolor{green!60!black}{-2\%} \\
\bottomrule
\end{tabular}
\end{table}

The correction adds per-step overhead when active. On 4 of 8 benchmarks, the method is actually \emph{faster} than baseline ($-2\%$ to $-13\%$ wall-clock) because the better-conditioned Hessian reduces Newton iterations; on others, the extra MAP solve adds up to $+23\%$ wall-clock time.

\section{Full Per-Initialization Numeric Results}
\label{app:init_results}

The heatmap in \Cref{fig:heatmap} summarizes Adaptive KS's improvement across all 55 benchmark--initialization conditions. Numeric values for each cell are available in the figure. Key observations:

\begin{itemize}
    \item \textbf{Pool initializations} (n6\_k15, n8\_k15): 10W/0L/12N across all benchmarks. The nested pool structure ($n_6 \subset n_8$) provides a dense initial comparison graph that combines well with the activation rule.
    \item \textbf{Pure matching initializations} (k3, k5, k15): 8W/2L/23N. Both losses occur on Hartmann-6D (pure\_k5: $-2.2\%$, $p < 0.05$; pure\_k15: $-1.1\%$, $p < 0.05$). These are the smallest significant effects in the entire study, and both occur on a benchmark where the baseline is already well-conditioned.
    \item \textbf{High-dimensional benchmarks} (DTLZ2-16D, DTLZ2-20D, Levy-20D, Plasma-16D): positive across all initializations, which is consistent with the disconnected-graph problem being the main bottleneck in those settings.
    \item \textbf{Low-dimensional benchmarks} (Vehicle-5D, Hartmann-6D, CarCab-7D): mostly neutral. At low dimensionality, the comparison graph provides sufficient curvature relative to the prior, and the correction has little to add.
\end{itemize}

\end{document}